\theoremstyle{plain}
\newtheorem{theorem}{Theorem}[section]
\newtheorem{corollary}[theorem]{Corollary}
\newtheorem{lemma}[theorem]{Lemma}
\newtheorem{definition}[theorem]{Definition}
\newcommand{\onev}{\mathbf{1}}
\newcommand{\zerov}{\mathbf{0}}
\newcommand{\tO}[1]{\widetilde{O}\left(#1\right)}
\newcommand{\tx}{\widetilde{x}}
\newcommand{\eps}{\varepsilon}
\newcommand{\ot}{\bar{t}}
\newcommand{\bb}{\boldsymbol{\mathit{b}}}
\newcommand{\ggg}{\boldsymbol{\mathit{g}}}
\newcommand{\ww}{\boldsymbol{\mathit{w}}}
\newcommand{\xx}{\boldsymbol{\mathit{x}}}
\newcommand{\oxx}{\boldsymbol{\bar{\mathit{x}}}}
\newcommand{\txx}{\boldsymbol{\widetilde{\mathit{x}}}}
\newcommand{\yy}{\boldsymbol{\mathit{y}}}
\renewcommand{\AA}{\boldsymbol{\mathit{A}}}
\newcommand{\II}{\boldsymbol{\mathit{I}}}
\newcommand{\OO}{\boldsymbol{\mathit{O}}}
\renewcommand{\epsilon}{\varepsilon}
\icmltitlerunning{Gradient Descent Converges Linearly for Logistic Regression on Separable Data}
\newcommand{\Procedure}{\FUNCTION}
\newcommand{\EndProcedure}{\ENDFUNCTION}
\newcommand{\State}{\STATE}
\newcommand{\For}{\FOR}
\newcommand{\EndFor}{\ENDFOR}
\newcommand{\Return}{\STATE {\bf return} }
\begin{document}

\twocolumn[
\icmltitle{Gradient Descent Converges Linearly \\ for Logistic Regression on Separable Data}

\begin{icmlauthorlist}
\icmlauthor{Kyriakos Axiotis}{a}
\icmlauthor{Maxim Sviridenko}{b}
\end{icmlauthorlist}

\icmlaffiliation{a}{Google Research, New York, NY, USA}
\icmlaffiliation{b}{Yahoo! Research, New York, NY, USA}

\icmlcorrespondingauthor{Kyriakos Axiotis}{axiotis@google.com}
\icmlcorrespondingauthor{Maxim Sviridenko}{sviri@yahooinc.com}

\icmlkeywords{Machine Learning, ICML}

\vskip 0.3in
]

\printAffiliationsAndNotice{This work was performed while the first author was at MIT.}  %

\begin{abstract}
We show that running gradient descent with variable learning rate guarantees loss 
$f(\xx) \leq 1.1 \cdot f(\xx^*) + \epsilon$
for the logistic regression objective, 
where the error $\epsilon$ decays exponentially with the number of iterations and polynomially with the magnitude of the entries of 
an arbitrary fixed solution $\xx^*$. 
This is in contrast to the common intuition that the absence of strong convexity precludes linear 
convergence of first-order methods, and highlights the importance of variable learning rates
for gradient descent. 
We also apply our ideas to sparse logistic regression, where they lead to an exponential improvement of the sparsity-error tradeoff.
\end{abstract}

\section{Introduction}

Logistic regression is one of the most widely used classification methods because of its simplicity, interpretability, and good practical performance. Yet, the convergence behavior of first-order methods
on this task is not well understood: In practice gradient descent performs much better than what the theory predicts.
In particular, a general analysis of gradient descent for smooth functions implies convergence with the error in function value decaying as $O(1/T)$.
Analyses with stronger, linear convergence guarantees generally require the function to satisfy the strong convexity property, which, in contrast
to other losses such as the $\ell_2$ loss, the logistic loss only satisfies in a bounded set of solutions around zero.
As a result, this introduces an \emph{exponential} runtime dependency on the magnitude of the 
target solution~\cite{ratsch2001convergence,freund2018condition}, which is undesirable in practice.
This poses a serious obstacle to obtaining high-precision solutions for logistic regression.

In fact, it was shown in~\cite{telgarsky2012primal}
that the $\mathrm{poly}(1/T)$ bound on function value convergence is tight for gradient descent on
general (non-linearly separable) data.
The significance of the separability of the data for convergence has also been observed 
in~\cite{telgarsky2013margins,ji2018risk,freund2018condition}, who present
convergence results based on quantitative measures of separability. 

A deeper study into the structure of both the exponential and logistic losses 
for separable data was initiated
by~\cite{telgarsky2012primal}, who showed that greedy coordinate descent
achieves linear convergence with a rate that depends on the maximum linear classification margin 
(i.e. hard SVM margin). Unfortunately, for logistic regression, it also has a $2^m$ dependence on the 
number of examples, making it inefficient for real-world tasks. 
\cite{telgarsky2013margins} refines the results of~\cite{telgarsky2012primal} for the exponential loss, 
but for logistic regression still suffers from an exponential overhead originating 
from the multiplicative discrepancy between the exponential and logistic losses.
Interestingly, however, the authors note (\cite{telgarsky2013margins}, Section 5)
that logistic regression 
experiments paint a much more favorable picture than the theory predicts.

A related line of work deals with convergence to the maximum-margin classifier on linearly
separable classification instances using gradient descent.
\cite{soudry2018implicit,ji2018risk} showed that the estimator obtained by optimizing the logistic
or the exponential loss with gradient descent
converges to the maximum-margin linear classifier at a rate of 
$O(\log \log T / \log T)$ (in $\ell_2$ norm). 
For the exponential loss, \cite{nacson2019convergence} showed that
the convergence bound to the maximum margin estimator can be exponentially improved to
$O(\log T / \sqrt{T})$, by using gradient descent with variable (increasing) learning rate.
The authors' experiments indicate that variable step sizes could lead to a similar exponential
improvements for the case of logistic regression and shallow neural networks.
Recently, \cite{ji2021characterizing} presented a novel primal-dual approach that proves
that the latter claim indeed holds for the logistic regression and exponential objectives,
obtaining a maximum-margin error decaying as $O(1/T)$, using a variable learning rate. This exponentially
improved upon the results of~\cite{soudry2018implicit,ji2018risk}.

Another approach to obtain high-precision solutions is by using second order methods, which in addition to first order (gradient) information, use second order (Hessian) information about the function. 
These make use of second order stability properties, such as quasi-self-concordance~\cite{bach2010self} combined with Newton's method~\cite{karimireddy2018global}, 
or ball oracles~\cite{carmon2020acceleration,adil2021unifying}.
Such approaches are generally not suitable for large-scale applications because of their reliance on repeated calls to large linear system solvers.

\paragraph{Our work.}
In this paper, we show that (under appropriate assumptions) we can get the best of both worlds of first and second order methods, thus giving a partial explanation for the excellent performance
that first-order methods have for logistic regression in practice. In particular, given a binary classification instance 
$(\AA\in\{-1,1\}^{m\times n}, \bb\in\{-1,1\}^{m})$ with associated logistic
loss $f(\xx) = \sum\limits_{i} \log (1 + \exp(-b_i (\AA\xx)_i))$,
we show that simple variants of gradient descent return a solution with $f(\xx) \leq (1+\delta) \cdot f(\xx^*) + \eps$ after $O\left( K\left(\frac{1}{\delta} + \log \frac{f(\zerov)}{\eps}\right)\right)$
iterations, where $K = \mathrm{poly}(n, \left\|\xx^*\right\|)$ and $x^*$ is an arbitrary fixed solution.
Even though the error still decays as $1/T$ in the worst case because of the $\frac{1}{\delta}$ dependence, the additive error is now
$\delta f(\xx^*)$ instead of $\delta f(\zerov)$, allowing for much faster convergence when the optimal loss $f(\xx^*)$ is smaller (which is our measure of linear separability of the data). For linearly
separable data, i.e. as $f(\xx^*)$ approaches $0$, the convergence becomes linear. 

Instead of properties like Lipschitzness, smoothness, strong convexity that are commonly used in the study of first order methods, we find that there are two properties that are more relevant to the 
structure of the logistic regression problem. The first one is \emph{second order robustness}, which means that the Hessian is stable (in a spectral sense) in any small enough norm ball~\cite{cohen2017matrix}.
This is closely related to quasi-self-concordance, a property that has been previously used in the analysis of second order algorithms~\cite{bach2010self}. The second property is what we call 
\emph{multiplicative smoothness}, which means that the function is locally smooth, with the smoothness constant being proportional to the function value (loss). 
A similar property was used by~\cite{ji2018risk} to prove convergence of logistic regression to the
maximum margin classifier.
Together, these properties 
show that, as the loss decreases, the objective becomes (locally) smoother and therefore the 
learning rate can increase. This motivates a
variable step size schedule that is inversely proportional to the loss, thus making larger steps as the solution approaches optimality. This in fact agrees with the observations 
of~\cite{soudry2018implicit,nacson2019convergence} on the importance of a variable learning rate. As can be seen in the toy example from~\cite{soudry2018implicit} in Figure~\ref{fig:toy}, 
simply replacing the fixed learning rate $\eta$ used in~\cite{soudry2018implicit}
by an increasing learning rate $\eta \cdot f(\xx^0) / f(\xx^T)$ yields an exponential improvement, both in loss and distance to the maximum margin estimator.

\begin{figure}[h]
\begin{center}
\includegraphics[width=0.49\columnwidth]{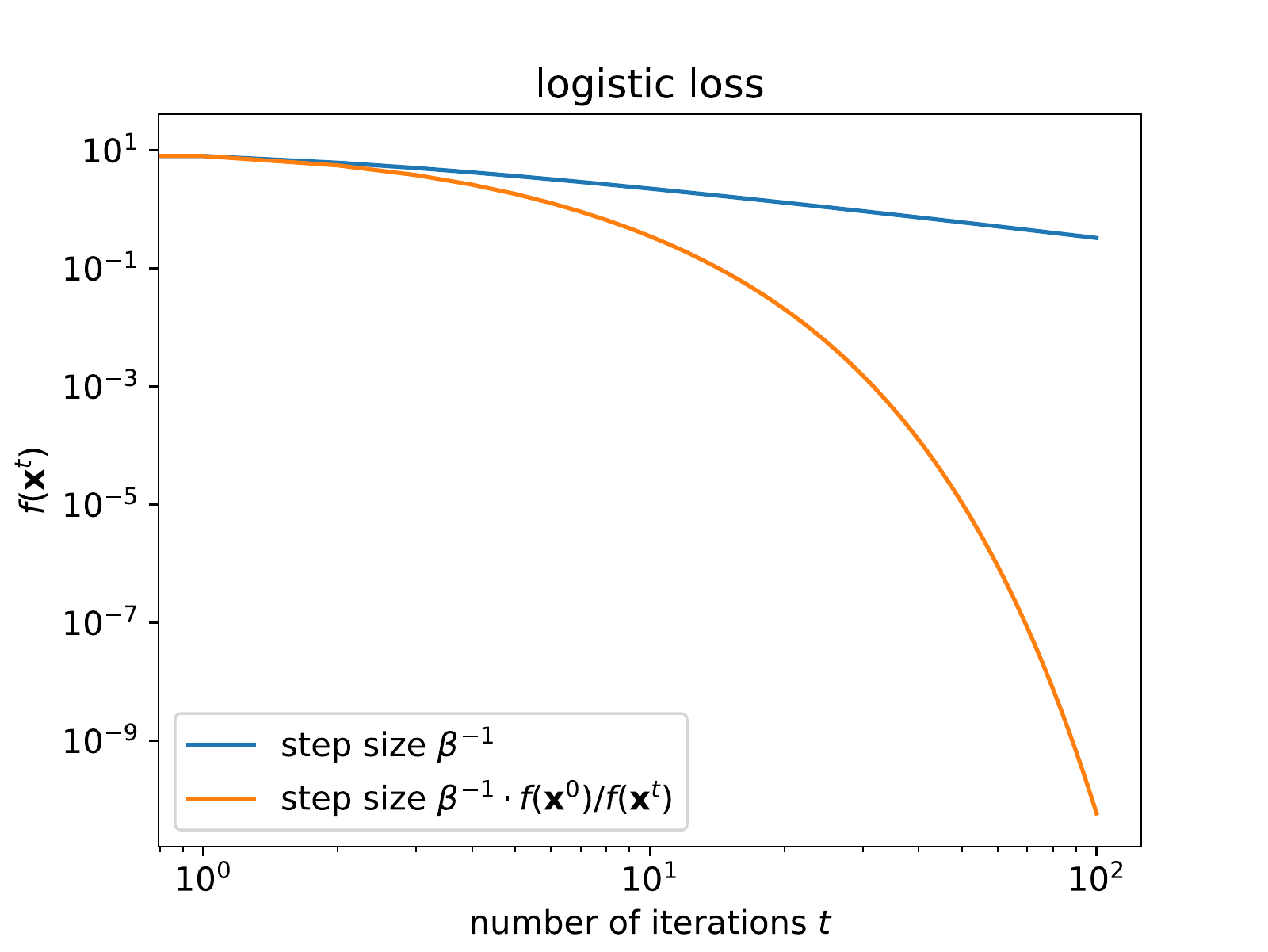}
\includegraphics[width=0.49\columnwidth]{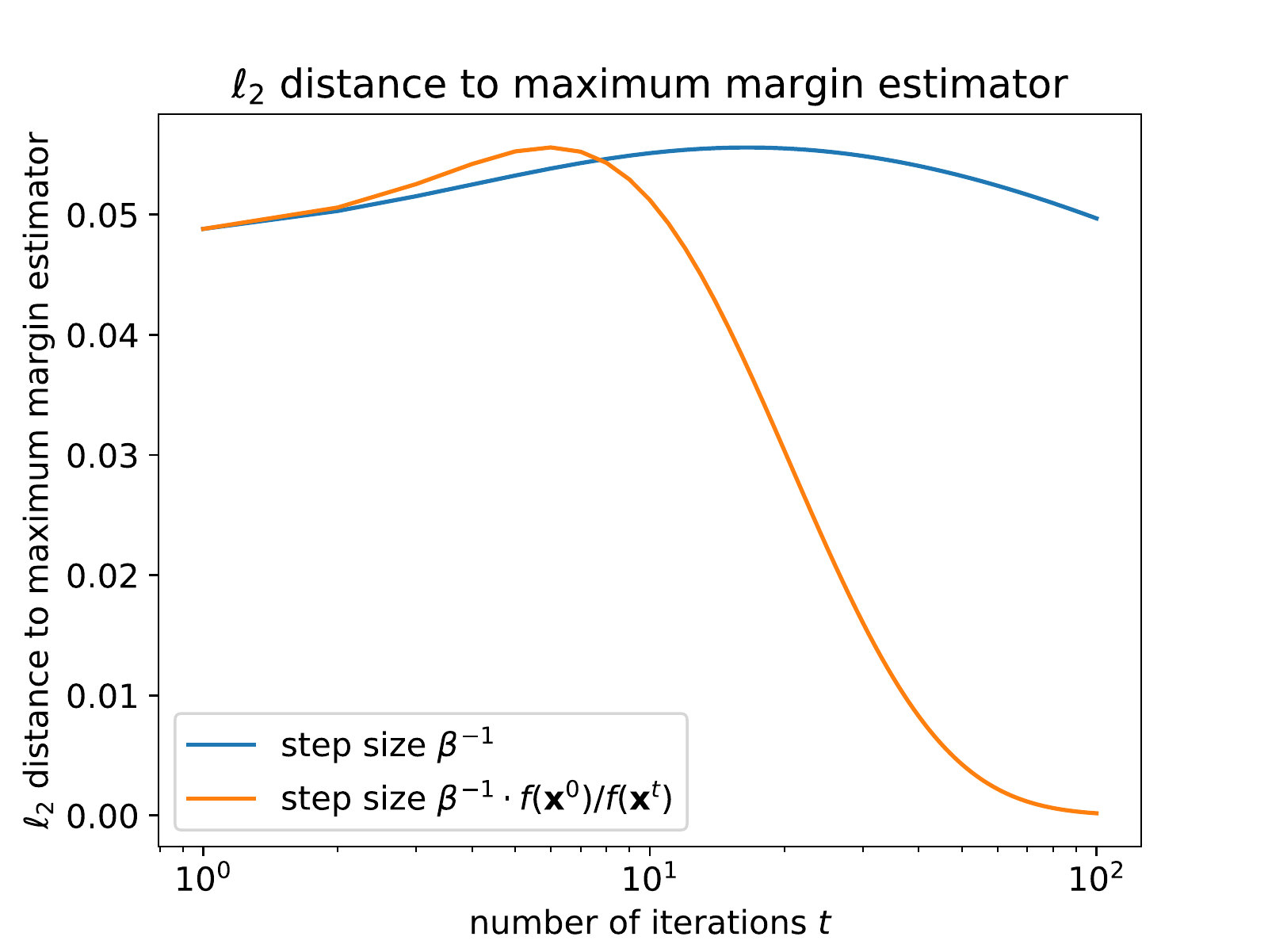}
\end{center}
\caption{Comparison between fixed and increasing step sizes in the toy example from Figure 1 of~\cite{soudry2018implicit}. The fixed step size is set to $\beta^{-1} := \left\|\AA\right\|_2^{-2}$, and the increasing
to $\beta^{-1} f(\xx^0) / f(\xx^T)$. The estimator error is defined as $\left\|\xx^t/\left\|\xx^t\right\|_2 - \xx^* / \left\|\xx^*\right\|_2\right\|_2$.}
\label{fig:toy}
\end{figure}

\bgroup
\def\arraystretch{1.5}%
\begin{table*}[t]
\caption{Algorithms for logistic regression and dependence on $m/\eps$ (omitting extra $\mathrm{polylog}(m,n)$ factors). Algorithms with exponential dependences on any problem parameter are omitted.
For example, the standard gradient descent analysis shows linear convergence, but with an exponential dependence on $\left\|\xx^*\right\|_\infty$ (see e.g. \citet{freund2018condition}).}
\label{table:1}
\vskip 0.15in
\centering
\begin{small}
\begin{sc}
\begin{tabular}{lccr}
 Algorithm & Order & Guarantee  & Runtime Error Dependence\\
 \hline
 \hline
 Gradient descent & First & $f(\xx) \leq f(\xx^*) + \eps$  & $ m / \eps$ \\
 Accelerated gradient descent & First & $f(\xx) \leq f(\xx^*) + \eps$ & 
 $\sqrt{m/\eps}$ \\ 
 Newton/Trust region %
 & Second
 & $f(\xx) \leq f(\xx^*) + \eps$ & $\log(m/\eps)$ \\
 This paper & First & $f(\xx) \leq (1+\delta) \cdot f(\xx^*) + \eps$ & $\delta^{-1} + \log (m/\eps)$
\end{tabular}
\end{sc}
\end{small}
\vskip -0.1in
\end{table*}
\egroup

\subsection{Sparse logistic regression}

\bgroup
\def\arraystretch{1.5}%
\begin{table*}[t]
\caption{Algorithms for sparse logistic regression and asymptotic sparsity dependences.}
\label{table:2}
\vskip 0.15in
\centering
\begin{small}
\begin{sc}
\begin{tabular}{lccr}
 Algorithm & Guarantee  & Sparsity & Order\\
 \hline
 \hline
 \cite{SSZ10} & $f(\xx) \leq f(\xx^*) + \eps$  & $\left\|\xx^*\right\|_1^2 m/\eps$ & First\\
 This paper & $f(\xx) \leq (1+\delta) \cdot f(\xx^*) + \eps$  & $\left\|\xx^*\right\|_1^2 (\delta^{-1} + \log(m/\eps))$ & First \\
\end{tabular}
\end{sc}
\end{small}
\vskip -0.1in
\end{table*}
\egroup
In practice, it is often important to force the solution of a logistic regression problem to be
\emph{sparse}, i.e. have only a few non-zero entries, which is a form of feature selection.
This is because most of the features might only be marginally useful, 
and thus one can drastically reduce the size of the model while
not significantly sacrificing the predictive performance. Apart from computational efficiency, 
feature selection is also important to improve interpretability and avoid overfitting.

Most progress in sparse optimization has focused on objective functions with condition number bounded by 
some $\kappa > 0$. Results in this line of work guarantee a solution with relaxed sparsity
$s' \geq s$, where $s$ is the target sparsity, and algorithms include lasso, 
orthogonal matching pursuit (OMP),
and 
iterative hard 
thresholding (IHT)~\cite{Natarajan95,IHT,SSZ10,JTD11,JTK14,axiotis2021sparse,axiotis2022iterative}. 
The state of the art 
result by~\cite{axiotis2022iterative} gives a sparsity of $s'=O(\kappa) \cdot s$ using
a variant of the IHT algorithm.

However, the condition number of the logistic loss is unbounded, because it is not strongly convex.
Therefore, these results do not directly apply,
although they do apply to $\ell_2$-regularized logistic regression. 
Some works~\cite{van2008high,bunea2008honest}
have analyzed lasso methods for logistic regression without condition number
assumptions, and
\cite{SSZ10} provides three
different analyses for smooth but not strongly convex functions. These apply to logistic regression and 
give a sparsity of $O\left(\left\|\xx^*\right\|_1^2 \frac{m}{\eps}\right)$
to achieve a loss of $f(\xx) \leq f(\xx^*) + \eps$. The most practical of these is
a forward greedy selection algorithm, which is also known as greedy coordinate descent.

{\bf Our work.} Using the second order stability and multiplicative smoothness properties, we show that
a slight variation of greedy coordinate descent gives a sparsity of 
\[ O\left(\left\|\xx^*\right\|_1^2(\delta^{-1} + \log (m/\eps))\right) \]
and a loss of $f(\xx) \leq (1+\delta) \cdot f(\xx^*) + \eps$. As long as the $1+\delta$ approximation in 
front of $f(\xx^*)$ is tolerated, as is the case when $f(\xx^*) \ll m$, this implies an exponential
improvement in the $\eps$ dependence from $\frac{m}{\eps}$ to $\log \frac{m}{\eps}$.
In addition, our analysis is compatible with incorporating fully corrective steps to the algorithm. These are steps that are occasionally performed to optimize over the support of the current solution (i.e. fully optimize the weights of the currently selected features) and is often used in applications like feature selection.
\section{Preliminaries}
\label{appendix:prelim}
\paragraph{Notation.} We denote $[n] = \{1,2,\dots,n\}$. We will use {\bf bold} to refer to vectors or matrices. We denote by $\zerov$ the all-zero vector, $\onev$ the all-one vector,
$\OO$ the all-zero matrix, and by $\II$ the identity matrix (with dimensions understood from the context). Additionally, we will denote by $\onev_i$ the $i$-th basis vector, i.e.
the vector that is $0$ everywhere except at position $i$.

In order to ease notation and where not ambiguous for two vectors $\xx, \yy\in \mathbb{R}^n$, we denote by $\xx \yy\in \mathbb{R}^n$ a vector with elements $(\xx\yy)_i=x_iy_i$, i.e.
the element-wise multiplication of two vectors $\xx$ and $\yy$. In contrast,
we denote their inner product by $\langle \xx,\yy\rangle$ or $\xx^\top \yy$. Similarly, $\xx^2\in \mathbb{R}^n$ will be the element-wise square of vector $\xx$.
For any function $g(t)$ of a single variable, let $g(\xx)\in \mathbb{R}^n$
  be a vector with elements $(g(\xx))_i=g(x_i)$, e.g. will use this notation when  $g$ is the sigmoid function.

For any vector $\xx\in\mathbb{R}^n$ and set $S\subseteq[n]$, we denote by
$\xx_S$ the vector that results from $\xx$ after zeroing out all the entries except those in positions given by indices in $S$. We will also use the notation $\nabla_S f(\xx) := (\nabla f(\xx))_S$ to denote
the restriction of a gradient to $S$. We also denote by 
$\mathrm{supp}(\xx) := \{i\in[n]\ |\ x_i \neq 0\}$ the \emph{support} of $\xx$.

We use the notation $\tO{\cdot}$ to hide $\mathrm{poly}\log(n,m)$ factors in $O$-notation, where $n$ is the dimension of the problem and $m$ is the number of examples.

\paragraph{Norms.}
For any $p\in(0,\infty)$ and weight vector $\ww\geq \zerov$, we define the weighted $\ell_p$ norm of a vector $\xx\in\mathbb{R}^n$ as:
\begin{align*}
\left\|\xx\right\|_{p,\ww} = \left(\sum\limits_{i} w_i x_i^p \right)^{1/p}\,.
\end{align*}
For $p=0$, we denote $\left\|\xx\right\|_0 = \left|\{i\ |\ x_i\neq 0\}\right|$ to be the \emph{sparsity} of $\xx$.
For $p=\infty$, we denote $\left\|\xx\right\|_\infty = \max_i |x_i|$ to be the maximum absolute value of $\xx$.

For a matrix $\AA\in\mathbb{R}^{m\times n}$, we let $\left\|\AA\right\|_{p\rightarrow q}$ 
be its $p$ to $q$ operator norm, defined as
\begin{align*}
\left\|\AA\right\|_{p\rightarrow q} = \underset{\xx\neq \zerov}{\max}\, 
\frac{\left\|\AA \xx\right\|_q}{\left\|\xx\right\|_p}\,.
\end{align*}
In particular, $\left\|\AA\right\|_{1\rightarrow \infty}$ is equal to the largest entry of $\AA$ in 
absolute value.

\paragraph{Smoothness and convexity.}
A differentiable function $f:\mathbb{R}^n\rightarrow\mathbb{R}$ is called \emph{convex} if for any
$\xx,\yy\in\mathbb{R}^n$ we have $f(\yy) \geq f(\xx) + \langle \nabla f(\xx), \yy-\xx\rangle$.
Furthermore, 
$f$ is called \emph{$L$-Lipschitz (with respect to some norm $\left\|\cdot\right\|$)}
for some real number $L > 0$ if for any $\xx, \yy\in \mathbb{R}^n$ we have
$\left|f(\yy) - f(\xx)\right| \leq L \left\|\yy-\xx\right\|$, and 
\emph{$\beta$-smooth} if for any $\xx,\yy\in\mathbb{R}^n$ we have
$\left\|\nabla f(\yy) - f(\xx)\right\|_* \leq \beta \left\|\yy-\xx\right\|^2$, where
$\left\|\cdot\right\|_*$ is the \emph{dual norm} of $\left\|\cdot\right\|$.
If $f$ is only $\beta$-smooth
along $s$-sparse directions (i.e. only for $\xx,\yy\in\mathbb{R}^n$ such that 
$\left\|\yy-\xx\right\|_0 \leq s$), then we call $f$ $\beta$-smooth \emph{at sparsity level $s$}
and denote the smallest such $\beta$ by $\beta_s$ and call it the 
\emph{restricted smoothness constant} (at sparsity level $s$).
\section{Logistic Regression Analysis via Multiplicative Smoothness}
\label{sec:mul}

In the logistic regression problem, our goal is to
minimize the function $f(\xx) = \sum\limits_{i=1}^m \log(1+e^{-(\AA\xx)_i})$, where 
$\AA\in\mathbb{R}^{m\times n}$ is a data matrix\footnote{This formulation is without loss of generality, because we can incorporate the binary $\pm1$ labels into the matrix $\AA$ and assume that all the labels are
positive.}.

Our starting point, as is usually the case with first-order methods, 
will be the second order Taylor expansion of $f$:
\begin{align}
f(\xx + \txx) = f(\xx) + \langle \nabla f(\xx), \txx\rangle + \frac{1}{2} \langle \txx, \nabla^2 f(\oxx) \txx\rangle\,,
\label{eq:taylor}
\end{align}
where, by the mean value theorem for twice continuously
differentiable functions, $\oxx$ is entry-wise between $\xx$
and $\xx'=\xx + \txx$, and $\nabla^2 f(\oxx)$ is the Hessian of $f$ at $\oxx$. 

\paragraph{Second-order robustness.} In fact, as long as the step $\txx$ is not too large,
the Hessian at $\oxx$ will not differ much (spectrally) from the Hessian at $\xx$.
This is because of the following property of the logistic function called
\emph{second order robustness}~\cite{cohen2017matrix}, which is also 
very closely related to quasi-self-concordance~\cite{bach2010self}.
\begin{definition}[Second-order robustness]
A twice differentiable function $f:\mathbb{R}^n\rightarrow \mathbb{R}$ is called 
$q$-\emph{second order robust} with respect to a norm $\left\|\cdot\right\|$ if 
its Hessian is stable in any $(1/q)$-sized $\left\|\cdot\right\|$-ball, i.e.
for any $\xx,\xx'\in\mathbb{R}^n$ such that 
$\left\|\xx'-\xx\right\| \leq 1/q$, 
we have $\frac{1}{2} \nabla^2 f(\xx) \preceq \nabla^2 f(\xx') \preceq 2 \nabla^2 f(\xx)$.
\end{definition}
It is not hard to see that $f$ is $2M$-second order robust with respect to the $\ell_1$ norm, 
where $M$ is an upper bound on the entries of $\AA$ in absolute value.
\begin{lemma}[Second-order robustness of the logistic loss]
The function $f(\xx) = \sum\limits_{i=1}^m \log(1+e^{-(\AA\xx)_i})$ is $2M$-second order robust
with respect to the $\ell_1$ norm, where $M$ is the largest entry of $\AA$ in absolute value.
\label{lem:second_order_robustness}
\end{lemma}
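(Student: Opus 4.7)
The plan is to reduce second-order robustness to a pointwise claim about the scalar ``variance'' factor $\sigma'(t) = \sigma(t)(1-\sigma(t))$ that appears in each Hessian summand, and then to bound the change in $\aa_i^\top \xx$ by H\"older's inequality using the fact that $\|\aa_i\|_\infty \leq M$.

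First I would compute the Hessian explicitly. Let $\aa_i^\top$ denote the $i$-th row of $\AA$ and $\sigma(t) = 1/(1+e^{-t})$. A direct differentiation of $f(\xx)=\sum_i \log(1+e^{-(\AA\xx)_i})$ gives
\begin{equation*}
\nabla^2 f(\xx) \;=\; \sum_{i=1}^m p_i(\xx)\, \aa_i \aa_i^\top, \qquad p_i(\xx) := \sigma\!\bigl(-(\AA\xx)_i\bigr)\bigl(1 - \sigma\!\bigl(-(\AA\xx)_i\bigr)\bigr).
\end{equation*}
Since each $\aa_i \aa_i^\top$ is positive semidefinite and independent of $\xx$, it suffices to show that whenever $\|\xx'-\xx\|_1 \leq 1/(2M)$, the scalar ratio $p_i(\xx')/p_i(\xx)$ lies in $[1/2,2]$ for every $i$; the spectral sandwich then follows by linearity.

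Next I would reduce this to a one-variable calculation. Writing $g(t) := \sigma(t)(1-\sigma(t))$, a short computation gives $(\log g)'(t) = 1 - 2\sigma(t)$, so $|(\log g)'(t)| \leq 1$ for all $t\in\mathbb R$. Hence $|\log g(t') - \log g(t)| \leq |t'-t|$, i.e.\ $g(t')/g(t) \in [e^{-|t'-t|}, e^{|t'-t|}]$. Now applying H\"older's inequality,
\begin{equation*}
\bigl|(\AA\xx')_i - (\AA\xx)_i\bigr| \;=\; |\aa_i^\top(\xx'-\xx)| \;\leq\; \|\aa_i\|_\infty \, \|\xx'-\xx\|_1 \;\leq\; M \cdot \tfrac{1}{2M} \;=\; \tfrac{1}{2},
\end{equation*}
so $p_i(\xx')/p_i(\xx) \in [e^{-1/2}, e^{1/2}] \subset [1/2, 2]$ since $e^{1/2} < 2$. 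Combining with the PSD decomposition above yields $\tfrac12 \nabla^2 f(\xx) \preceq \nabla^2 f(\xx') \preceq 2\nabla^2 f(\xx)$.

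The only genuinely ``nontrivial'' ingredient is the uniform bound $|(\log g)'(t)| \leq 1$, and this is where I expect the analysis would need the most care if we wanted to tighten the robustness radius beyond $1/(2M)$. As stated, however, the radius $1/(2M)$ leaves comfortable slack ($e^{1/2} \approx 1.65 < 2$), so the stated constant is easily reached by the above two-line chain: bound $|\aa_i^\top (\xx'-\xx)|$ by H\"older, push through the one-dimensional log-derivative bound on $g$, and apply linearity of the Hessian sum.
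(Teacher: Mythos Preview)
Your proof is correct and essentially identical to the paper's: both compute the Hessian as a weighted sum with weights $g(t)=\sigma(t)(1-\sigma(t))$, use the bound $|(\log g)'(t)|=|1-2\sigma(t)|\leq 1$ to get $g(t')/g(t)\in[e^{-1/2},e^{1/2}]\subset[1/2,2]$ whenever $|t'-t|\leq 1/2$, and then apply the H\"older/operator-norm bound $\|\AA(\xx'-\xx)\|_\infty \leq M\|\xx'-\xx\|_1 \leq 1/2$. The only cosmetic difference is that you write the Hessian as $\sum_i p_i(\xx)\,\aa_i\aa_i^\top$ while the paper writes it as $\AA^\top\mathrm{diag}(w(\AA\xx))\AA$.
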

\begin{proof}
For any $\xx\in\mathbb{R}^n$, we have $\nabla^2 f(\xx) = \AA^\top \mathrm{diag}(w(\AA\xx)) \AA$,
where $w(\AA\xx) = \sigma(\AA\xx)(1-\sigma(\AA\xx))$ and $\sigma(t) = 1/(1+e^{-t})$ is the sigmoid
function, that we extend to vectors by applying it elementwise.
We define $r(t) := \log w(t)$, which is a $1$-Lipschitz function, because
\begin{align*}
r'(t) = \frac{w'(t)}{w(t)} = \frac{w(t)\cdot (1-2\sigma(t))}{w(t)} = 1-2\sigma(t)\,,
\end{align*}
whose absolute value is always bounded by $1$. 
Therefore, for any $|t' - t| \leq 1/2$, we have 
\begin{align*}
& |r(t') - r(t)| \leq 1/2 \\
 \Leftrightarrow  &
 \left|\log \frac{w(t')}{w(t)}\right| \leq 1/2\\
 \Rightarrow  &
 \frac{1}{2} w(t)\leq w(t') \leq 2 w(t)\,,
\end{align*}
and consequently for any $\xx,\xx'$ such that 
\[ 
\left\|\xx'-\xx\right\|_1 \leq \frac{1}{2M} 
\Rightarrow 
\left\|\AA\xx'-\AA\xx\right\|_\infty \leq 1/2 
\,,\]
we have that $\frac{1}{2} w(\AA\xx) \leq w(\AA\xx') \leq 2 w(\AA\xx)$.
This immediately implies that $\frac{1}{2} \nabla^2 f(\xx) \preceq \nabla^2 f(\xx') \preceq 2 \nabla^2 f(\xx)$,
concluding the proof.
\end{proof}
Because of Lemma~\ref{lem:second_order_robustness}, (\ref{eq:taylor}) implies the much simpler
\begin{align}
f(\xx + \txx) \leq f(\xx) + \langle \nabla f(\xx), \txx\rangle + \langle \txx, \nabla^2 f(\xx) \txx\rangle\,,
\label{eq:taylor2}
\end{align}
as long as $\left\|\txx\right\|_1 \leq 1/(2M)$.

\paragraph{Multiplicative smoothness.}
We can easily calculate that
$\nabla f(\xx) = -\AA^\top \left(\onev - \sigma(\AA\xx)\right)$, where $\sigma(t) = 1/(1+e^{-t})$ is the
sigmoid function, and 
$\nabla^2 f(\xx) = \AA^\top \mathrm{diag}(w(\AA\xx)) \AA$, 
where $w(\AA\xx) = \sigma(\AA\xx) (\onev - \sigma(\AA\xx))$ are diagonal weights.
Now, we should note that the second order term of (\ref{eq:taylor2}) can be re-written as
$\langle w(\AA\xx), (\AA\txx)^2\rangle$. This term,
whose magnitude is what will determine the step size of the algorithm and in turn the bound on
the total number of iterations, becomes smaller as the weights $w(\AA\xx)$ become smaller.
The crucial observation is that these weights are bounded in a way that depends on the 
logistic loss of the solution $\xx$, as shown in the lemma below:
\begin{lemma}[Sum of second derivatives of the logistic function]
Let $f(\xx) = \sum\limits_{i=1}^m \log(1+e^{-(\AA\xx)_i})$ and
$w(t) = \sigma(t)(1-\sigma(t))$, where $\sigma$ is the sigmoid function. Then,
\begin{align}
\sum\limits_{i=1}^m w((\AA\xx)_i) \leq f(\xx)\,.
\end{align}
\label{lem:sum_2nd_derivative}
\end{lemma}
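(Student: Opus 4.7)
The plan is to reduce the inequality to a pointwise statement about a single scalar, then dispatch it with two elementary one-line bounds. Summing the stated inequality over $i$ from $1$ to $m$ will make it enough to prove, for every $t\in\mathbb{R}$,
\[
w(t) \;=\; \sigma(t)\bigl(1-\sigma(t)\bigr) \;\leq\; \log\bigl(1+e^{-t}\bigr).
\]
Once this scalar inequality is in hand, applying it term by term at $t=(\AA\xx)_i$ and summing recovers exactly the bound $\sum_i w((\AA\xx)_i)\leq f(\xx)$ from the definition of $f$.

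To prove the pointwise bound, I would chain two easy observations. First, since $\sigma(t)\in(0,1)$, the factor $\sigma(t)$ in $w(t)=\sigma(t)(1-\sigma(t))$ is bounded above by $1$, so
\[
w(t) \;\leq\; 1-\sigma(t).
\]
Second, use the classical logarithm inequality $1-x\leq -\log x$ for $x>0$, specialized to $x=\sigma(t)$. Since $-\log\sigma(t)=\log(1/\sigma(t))=\log(1+e^{-t})$, this yields
\[
1-\sigma(t) \;\leq\; -\log\sigma(t) \;=\; \log\bigl(1+e^{-t}\bigr),
\]
and composing the two inequalities gives the desired pointwise bound.

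There is essentially no obstacle here; the whole argument is two lines once one notices that $1-\sigma(t)$ is the natural intermediate quantity. If one preferred a slightly sharper chain (which is not needed for the lemma), one could instead note $w(t)\leq \min(\sigma(t),1-\sigma(t))$ and use $\log(1+u)\geq u/(1+u)$ for $u=e^{-|t|}$, but this extra care buys nothing for what the paper actually needs. The only place to be careful in the writeup is the innocuous identity $1-\sigma(t)=e^{-t}/(1+e^{-t})$ and the equivalent form $-\log\sigma(t)=\log(1+e^{-t})$, both of which follow immediately from $\sigma(t)=1/(1+e^{-t})$.
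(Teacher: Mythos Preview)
Your proposal is correct and is essentially identical to the paper's own proof: the paper also reduces to the pointwise inequality and proves it via the chain $w(t)=\sigma(t)(1-\sigma(t))\leq 1-\sigma(t)\leq -\log\sigma(t)=\log(1+e^{-t})$, invoking $\log p\leq p-1$ (equivalently your $1-x\leq -\log x$).
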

\begin{proof}
We have
$w(t) = \sigma(t) (1 - \sigma(t)) \leq 1-\sigma(t) \leq -\log \sigma(t)
=\log(1+e^{-t})$,
where we used the inequality $\log p \leq p-1$ for all $p > 0$.
\end{proof}
In other words, as the loss decreases, $f$ becomes \emph{smoother} (in an appropriate sense).
This is what allows the algorithm to employ a step size that is \emph{inversely proportional} to the loss.
A similar observation has been made in~\cite{ji2018risk}.
The above discussion motivates the following
definition of \emph{multiplicative smoothness}. This is related to the usual 
definition of smoothness but also
incorporates the property that the function becomes smoother as the loss decreases.
\begin{definition}[Multiplicative smoothness]
We call a differentiable function $f:\mathbb{R}^n\rightarrow \mathbb{R}_{>0}$
$\mu$-\emph{multiplicatively smooth} with respect to a norm $\left\|\cdot\right\|$, if for any 
$\xx,\xx'\in\mathbb{R}^n$ we have
\begin{align*}
\frac{\txx^\top \nabla^2 f(\xx) \txx}{f(\xx)} \leq \mu\left\|\txx\right\|^2\,.
\end{align*}
\end{definition}
Our use of a general norm is not an over-generalization, since as we will see the 
$\ell_1$ norm is more suitable for sparse logistic regression, and the $\ell_2$ norm
is more suitable for the unrestricted case. 
In fact, it can be proved that the logistic loss
is $M^2$-multiplicatively smooth with respect to the $\ell_1$ norm, where we remind that $M$ is a bound on the entries of $\AA$ in absolute value:
\begin{lemma}[Multiplicative smoothness of the logistic loss]
The function $f(\xx) = \sum\limits_{i=1}^m \log(1+e^{-(\AA\xx)_i})$ is $M^2$-multiplicatively smooth
with respect to the $\ell_1$ norm, where $M$ is the largest entry of $\AA$ in absolute value.
\label{lem:multiplicative_smoothness}
\end{lemma}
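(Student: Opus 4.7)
The plan is to just directly compute the quadratic form and apply two standard bounds that have already been set up in the paper.

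First, I would use the explicit form $\nabla^2 f(\xx) = \AA^\top \mathrm{diag}(w(\AA\xx)) \AA$ derived just above to rewrite
\begin{align*}
\txx^\top \nabla^2 f(\xx) \txx = \sum_{i=1}^m w((\AA\xx)_i)\, (\AA\txx)_i^2\,.
\end{align*}
Next, I would bound each coordinate of $\AA\txx$ by H\"older's inequality: since every entry of $\AA$ is at most $M$ in absolute value, for every row $i$
\begin{align*}
|(\AA\txx)_i| \leq \|\AA\|_{1\to\infty}\, \|\txx\|_1 \leq M\,\|\txx\|_1\,,
\end{align*}
so $(\AA\txx)_i^2 \leq M^2 \|\txx\|_1^2$. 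Substituting this into the quadratic form and pulling the $\txx$-dependent factor outside the sum gives
\begin{align*}
\txx^\top \nabla^2 f(\xx) \txx \leq M^2 \|\txx\|_1^2 \sum_{i=1}^m w((\AA\xx)_i)\,.
\end{align*}

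Finally, I would invoke Lemma~\ref{lem:sum_2nd_derivative} (sum of second derivatives of the logistic function), which says exactly that $\sum_{i=1}^m w((\AA\xx)_i) \leq f(\xx)$. Dividing by $f(\xx) > 0$ yields the desired bound
\begin{align*}
\frac{\txx^\top \nabla^2 f(\xx) \txx}{f(\xx)} \leq M^2\,\|\txx\|_1^2\,,
\end{align*}
which is the definition of $M^2$-multiplicative smoothness with respect to the $\ell_1$ norm. There is no real obstacle here: the work is packaged in the earlier lemma bounding $\sum_i w((\AA\xx)_i)$ by $f(\xx)$, and all that remains is to decouple the $\txx$ part of the Hessian quadratic form from the $\xx$ part via the elementary operator-norm bound $\|\AA\txx\|_\infty \leq M\|\txx\|_1$.
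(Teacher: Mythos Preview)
Your proof is correct and essentially identical to the paper's: both rewrite the Hessian quadratic form as $\sum_i w((\AA\xx)_i)(\AA\txx)_i^2$, bound $(\AA\txx)_i^2 \le \|\AA\txx\|_\infty^2 \le M^2\|\txx\|_1^2$, and then apply Lemma~\ref{lem:sum_2nd_derivative}.
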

\begin{proof}
For any $\xx,\txx\in\mathbb{R}^n$, using the fact that 
$\nabla^2 f(\xx) = \AA^\top \mathrm{diag}\left(w(\AA\xx)\right) \AA$, where
$w$ is as defined in Lemma~\ref{lem:sum_2nd_derivative}, we have
\begin{align*}
\txx^\top \nabla^2 f(\xx) \txx
& = \sum\limits_{i=1}^m w(\AA\xx)_i (\AA\txx)_i^2\\
& \leq \sum\limits_{i=1}^m w(\AA\xx)_i \left\|\AA\txx\right\|_\infty^2\\
& \leq M^2 \sum\limits_{i=1}^m w(\AA\xx)_i \left\|\txx\right\|_1^2\\
& \leq M^2 f(\xx) \left\|\txx\right\|_1^2\,,
\end{align*}
where the second to 
last inequality follows from the fact that the entries of $\AA$ are bounded by $M$,
and the last inequality from Lemma~\ref{lem:sum_2nd_derivative}.
\end{proof}

In the following sections, we will see how the second order robustness and multiplicative smoothness properties play into the design and analysis of algorithms for sparse and general
logistic regression.

\section{Sparse logistic regression}
\label{sec:sparse}

As we saw, the logistic loss is $2M$-second order robust and $M^2$-multiplicatively smooth with respect to the $\ell_1$ norm. This is an ideal norm for \emph{sparse} logistic regression,
where in addition to minimizing the loss we want to restrict the solution to have few non-zero entries. In 
particular, it yields a variant of the $\ell_1$ gradient descent algorithm (aka
greedy coordinate descent), which is presented in Algorithm~\ref{alg:sparse}.

\begin{algorithm}
    \caption{Greedy Coordinate Descent}
    \begin{algorithmic}[1] %
        \Procedure{GreedyCoordinateDescent($\xx^0, T, M, B$)} 
			\State Let $f(\xx) := \sum\limits_{i=1}^m \log(1 + e^{-b_i (\AA\xx)_i})$
			\For{$t=0,\dots,T-1$}
				\State For all $i\in[n]$ define $\zeta_i = \begin{cases}
				\lambda_t & \text{ if $x_i^t = 0$}\\
				0 & \text{ if $\left|x_i^t\right| \geq B$ and $\nabla_i f(\xx^t) \cdot x_i^t < 0$}\\
				1 & \text{ otherwise}\\
				\end{cases}$
				\State $i' \leftarrow \mathrm{argmax}_i \left\{\zeta_i \left|\nabla_i f(\xx^t)\right|\right\}$
				\State $\eta \leftarrow \left(2M^2 f(\xx^t)\right)^{-1}$
				\State $x_{i'}^{t+1} \leftarrow x_{i'}^t - \eta \nabla_{i'} f(\xx^t)$
			\EndFor
			\Return $\xx^T$
        \EndProcedure
    \end{algorithmic}
	\label{alg:sparse}
\end{algorithm}

The first thing that should be noted about this algorithm is the crucial parameters $\lambda_t$. These
parameters offer a quantitative threshold between sparsity and speed of convergence. In particular,
when $\lambda_t$ is $1$, then all entries (regardless of whether they are zero or not) are treated the 
same.
When $\lambda_t \ll 1$, on the other hand, the gradient entries corresponding to
zero entries are discounted by a factor $\ll 1$,
thus making the algorithm less eager to update these as opposed to non-zero entries, whose update
doesn't increase sparsity.

We are ready for the main theorem of this section.
In the proof, which can be found in Appendix~\ref{sec:proof_thm_sparse},
we present an analysis of Algorithm~\ref{alg:sparse} for sparse logistic regression.
In addition to an upper bound $B\geq \left\|\xx^*\right\|_\infty$, it also requires an 
approximation $B_1$ of $\left\|\xx^*\right\|_1$. 
One possible approach is to approximate it by $B$, but in practice this would be a learning rate
hyperparameter to be tuned.
We note $\xx^*$ is an arbitrary fixed solution (not necessarily optimal), whose entries are
bounded by $B$ in absolute value.

\begin{theorem}[Sparse logistic regression]
Given a binary classification instance $(\AA\in[-M,M]^{m\times n},\bb\in\{1,-1\}^m)$ 
and for any solution $\xx^*\in[-B,B]^n$ with 
$M \geq \max\{\left\|\xx^*\right\|_1^{-1}, B^{-1}\}$
\footnote{The theorem can be stated without this additional 
constraint,
but we include it because it makes the bounds considerably simpler.} and a known
parameter $B_1 \in \left[\frac{1}{C}\left\|\xx^*\right\|_1, \left\|\xx^*\right\|_1\right]$
for some $C \geq 1$,
Algorithm~\ref{alg:sparse} with 
$\lambda_t = \min\{B_1/\left\|\xx^t\right\|_1, 1\}$,
initial solution $\xx^0\in\mathbb{R}^n$,
and error tolerance $0 < \eps < m / 2$ returns a solution $\xx$ with
\begin{align*}
f(\xx) \leq (1+\delta) \cdot f(\xx^*) + \eps
\end{align*}
and sparsity 
\begin{align*} 
s' & := \left\|\xx\right\|_0 \\
& = 
O\left(\left\|\xx^*\right\|_1^2 M^2\left(\frac{1}{\delta} + 
\log \frac{f(\xx^0) - f(\xx^*)}{\eps}\right)\right)
\end{align*}
in
\begin{align*}
T = 
O\Big(
& \left(
\left\|\xx\right\|_0^2 
+ \left\|\xx^*\right\|_0^2 
\right)\\
& M^2 B^2 C^2
\left(\frac{1}{\delta} + \log \frac{f(\xx^0) - f(\xx^*)}{\eps}\right)\Big)
\end{align*}
iterations, for any choice of $\delta \in (0,1)$.
Each iteration consists of evaluating the logistic regression gradient $\nabla f$ plus $O(m+n)$ additional time.
\label{thm:sparse}
\end{theorem}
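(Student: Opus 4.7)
The plan is to (i) establish a per-iteration descent by combining second-order robustness with multiplicative smoothness, (ii) lower-bound the selected gradient $|g_{i'}|$ by the loss gap $h_t := f(\xx^t)-f(\xx^*)$ using convexity and the $\zeta_i$ weights, (iii) solve the resulting two-regime recursion for $T$, and (iv) separately count iterations in which a new coordinate enters the support for the sparsity bound.

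For (i), apply (\ref{eq:taylor}) to the step $\txx := -\eta g_{i'}\onev_{i'}$ with $\eta = (2M^2 f(\xx^t))^{-1}$. The pointwise inequality $1-\sigma(t) \le \log(1+e^{-t})$ yields $\left\|\nabla f(\xx^t)\right\|_\infty \le M f(\xx^t)$, so $\left\|\txx\right\|_1 \le 1/(2M)$ and Lemma~\ref{lem:second_order_robustness} upgrades (\ref{eq:taylor}) to (\ref{eq:taylor2}); Lemma~\ref{lem:multiplicative_smoothness} applied along $\onev_{i'}$ gives $(\nabla^2 f(\xx^t))_{i',i'} \le M^2 f(\xx^t)$, and the step-size choice produces $f(\xx^{t+1}) \le f(\xx^t) - g_{i'}^2/(4 M^2 f(\xx^t))$. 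For (ii), convexity gives $h_t \le \langle \nabla f(\xx^t), \xx^t - \xx^*\rangle$; the key observation is that coordinates with $\zeta_i = 0$ contribute non-positively, since $|x_i^t|\ge B\ge |x_i^*|$ together with $g_i x_i^t<0$ force $g_i(x_i^t-x_i^*)\le 0$ by a sign split on $x_i^t$. The surviving $\zeta_i\in\{1,\lambda_t\}$ terms are controlled by $\max_i\zeta_i|g_i|$ multiplied by $K := \left\|\xx^t\right\|_1 + (1+\lambda_t^{-1})\left\|\xx^*\right\|_1$. Combined with the invariant $\left\|\xx^t\right\|_\infty \le 2B$ (each update moves a coordinate by at most $1/(2M)\le B/2$, and the $\zeta_i=0$ rule stops growth past $B$) and the schedule $\lambda_t = \min\{B_1/\left\|\xx^t\right\|_1,1\}$ with $B_1 \ge \left\|\xx^*\right\|_1/C$, this simplifies to $K = O(BC(\left\|\xx\right\|_0 + \left\|\xx^*\right\|_0))$. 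Since $\zeta_{i'} \le 1$, the greedy selection forces $g_{i'}^2 \ge h_t^2/K^2$.

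Substituting into the descent bound produces $h_{t+1} \le h_t - h_t^2/(4 M^2 K^2 (h_t + f(\xx^*)))$. A standard two-phase analysis---geometric decay of $h_t$ down to $\max\{f(\xx^*),\eps\}$ in $O(M^2 K^2\log(h_0/\max\{f(\xx^*),\eps\}))$ iterations, followed by a $1/t$ phase bringing $h_t$ to $\delta f(\xx^*)$ in $O(M^2 K^2/\delta)$ more iterations---yields the claimed $T$. For (iv), I would rerun the convexity argument in $\ell_1/\ell_\infty$ duality: the greedy rule gives $\zeta_{i'}|g_{i'}| \ge \lambda_t \left\|\nabla f(\xx^t)\right\|_\infty$ and convexity gives $\left\|\nabla f(\xx^t)\right\|_\infty \ge h_t/(\left\|\xx^t\right\|_1+\left\|\xx^*\right\|_1)$, so whenever $\left\|\xx^t\right\|_1 \le B_1$ one has $|g_{i'}|\ge h_t/O(\left\|\xx^*\right\|_1)$. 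Applying the same two-phase analysis with $K$ replaced by $\tilde K = O(\left\|\xx^*\right\|_1)$ bounds the number of iterations with $i'\notin S_t$---the only ones that can grow $\left\|\xx\right\|_0$---by $O(\left\|\xx^*\right\|_1^2 M^2 (\delta^{-1}+\log(h_0/\eps)))$; the complementary regime $\left\|\xx^t\right\|_1 > B_1$ contributes no further sparsity to leading order because reaching it already requires enough coordinates to be in the support.

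The main obstacle will be step (iv): obtaining the tighter $\left\|\xx^*\right\|_1^2 M^2$ sparsity rather than the looser $B^2 C^2(\left\|\xx\right\|_0 + \left\|\xx^*\right\|_0)^2 M^2$ that drives the iteration count. This requires exploiting that the $\zeta_i$-rule makes ``new-coordinate'' iterations yield strictly stronger descent than ``refine-coordinate'' ones, and carefully tracking the interplay between $\left\|\xx^t\right\|_1$ and $B_1$ so that the two regimes of $\lambda_t$ can be handled separately without the $C$-approximation of $B_1$ propagating into the sparsity bound in the wrong place.
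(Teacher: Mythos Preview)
Your steps (i)--(iii) are essentially the paper's argument: the descent inequality $f(\xx^{t+1})\le f(\xx^t)-g_{i'}^2/(4M^2 f(\xx^t))$, the convexity-based gradient lower bound with the $\zeta_i=0$ coordinates removed, the $\left\|\xx^t\right\|_\infty$ invariant, and the two-phase recursion all match Lemmas~\ref{lem:second_order_robustness}, \ref{lem:multiplicative_smoothness}, \ref{lem:grad_lb}, \ref{lem:coord_upd}, \ref{lem:aux_convergence}. Your single bound $h_t\le \zeta_{i'}|g_{i'}|\cdot K$ with $K=\left\|\xx^t\right\|_1+(1+\lambda_t^{-1})\left\|\xx^*\right\|_1$ is correct and, after substituting the $\left\|\xx^t\right\|_\infty\le 2B$ invariant and $\lambda_t^{-1}\le \max\{\left\|\xx^t\right\|_1/B_1,1\}$, indeed gives $K=O\bigl(BC(\left\|\xx\right\|_0+\left\|\xx^*\right\|_0)\bigr)$ and hence the stated iteration count $T$.

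The genuine gap is in step (iv). Your regime split on $\left\|\xx^t\right\|_1\lessgtr B_1$ does not control the number of sparsity-increasing iterations. The ``good'' regime bounds iterations with $\left\|\xx^t\right\|_1\le B_1$, which is a \emph{different} set from the iterations with $i'\notin S_t$; and the claim that the complementary regime ``contributes no further sparsity to leading order because reaching it already requires enough coordinates to be in the support'' is unjustified: having $\left\|\xx^t\right\|_1>B_1$ only gives a \emph{lower} bound $\left\|\xx^t\right\|_0\ge B_1/(2B)$, not an upper bound, and nothing prevents arbitrarily many new coordinates from being selected while $\left\|\xx^t\right\|_1>B_1$. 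Your own bound $|g_{i'}|\ge h_t/(\left\|\xx^t\right\|_1+\left\|\xx^*\right\|_1)$ in that regime still carries the uncontrolled $\left\|\xx^t\right\|_1$.

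The missing idea is an \emph{asymmetric} H\"older step that the paper encodes as the two-case Lemma~\ref{lem:grad_lb}. Since $\xx^t$ is supported on $S_t$, one can write
\[
h_t \;\le\; \left\|\nabla_F f(\xx^t)\right\|_\infty\left\|\xx^*\right\|_1 \;+\; \left\|\nabla_{S_t\cap F} f(\xx^t)\right\|_\infty\left\|\xx^t\right\|_1,
\]
where $F=\{i:\zeta_i\neq 0\}$. When the selected coordinate is new ($i'\notin S_t$, so $\zeta_{i'}=\lambda_t$), every $i\in S_t\cap F$ has $\zeta_i=1$ and hence $|g_i|\le \zeta_{i'}|g_{i'}|=\lambda_t|g_{i'}|$, while $|g_{i'}|=\left\|\nabla_F f(\xx^t)\right\|_\infty$. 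This puts the $\lambda_t$ factor on the $\left\|\xx^t\right\|_1$ term rather than on $\left\|\xx^*\right\|_1$, giving
\[
|g_{i'}|\;\ge\;\frac{h_t}{\left\|\xx^*\right\|_1+\lambda_t\left\|\xx^t\right\|_1}\,.
\]
Now the schedule is exactly designed so that $\lambda_t\left\|\xx^t\right\|_1\le B_1\le \left\|\xx^*\right\|_1$ \emph{always}, regardless of which regime $\left\|\xx^t\right\|_1$ is in. Thus every sparsity-increasing iteration has progress at least $h_t^2/\bigl(16M^2 f(\xx^t)\left\|\xx^*\right\|_1^2\bigr)$, and Lemma~\ref{lem:aux_convergence} immediately yields the claimed $s'=O\bigl(\left\|\xx^*\right\|_1^2M^2(\delta^{-1}+\log(h_0/\eps))\bigr)$ with no regime split needed. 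Your merged bound $K$ collapses the two cases of Lemma~\ref{lem:grad_lb} into one and thereby loses precisely this placement of $\lambda_t$.
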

If the parameters $M,B,C$ are bounded by $\tO{1}$, we get a cleaner statement. 
\begin{corollary}
If $M,B,C\leq \tO{1}$ and $\xx^*$ is $s$-sparse, then Algorithm~\ref{alg:sparse} 
with $\lambda_t = \min\left\{B_1/\left\|\xx^t\right\|_1, 1\right\}$ 
returns a solution $\xx$ with
\begin{align*}
f(\xx) \leq 1.1 \cdot f(\xx^*) + \eps
\end{align*}
and sparsity 
\begin{align*}
s' & := \left\|\xx\right\|_0 \\
& = \tO{s^2 \log \frac{1}{\eps}}
\end{align*}
in
\begin{align*}
T = \tO{s^4 \log^3 \frac{1}{\eps}}
\end{align*}
iterations.
\label{cor:sparse}
\end{corollary}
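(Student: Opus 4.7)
The plan is to derive Corollary \ref{cor:sparse} as a direct specialization of Theorem \ref{thm:sparse}, with essentially no new work beyond substituting the simplified parameter assumptions into the general bounds. First I would fix $\delta = 0.1$, which is what produces the $1.1$ prefactor in $f(\xx) \leq 1.1 \cdot f(\xx^*) + \eps$. I would then choose the initial iterate $\xx^0 = \zerov$, so that $f(\xx^0) - f(\xx^*) \leq f(\xx^0) = m \log 2$, and therefore $\log\left(\frac{f(\xx^0) - f(\xx^*)}{\eps}\right) = O\left(\log \frac{m}{\eps}\right) = \tO{\log \frac{1}{\eps}}$, absorbing the $\log m$ factor into $\tO{\cdot}$.

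Next I would translate the sparsity assumption into an $\ell_1$ bound. Since $\xx^* \in [-B,B]^n$ with at most $s$ nonzeros and $B \leq \tO{1}$, we get $\left\|\xx^*\right\|_1 \leq s B = \tO{s}$. The hypothesis of Theorem \ref{thm:sparse} requires the knowledge of some $B_1 \in [\left\|\xx^*\right\|_1/C, \left\|\xx^*\right\|_1]$, and the assumption $C \leq \tO{1}$ in the corollary is precisely what makes the $C^2$ factor in the iteration bound polylogarithmic. Plugging $\left\|\xx^*\right\|_1^2 = \tO{s^2}$, $M^2 = \tO{1}$, and the logarithmic factor above into the sparsity formula $s' = O\bigl(\left\|\xx^*\right\|_1^2 M^2 (\delta^{-1} + \log((f(\xx^0) - f(\xx^*))/\eps))\bigr)$ of Theorem \ref{thm:sparse} collapses to $s' = \tO{s^2 \log \frac{1}{\eps}}$, which is the claimed sparsity.

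For the iteration count, I would substitute the just-derived bound back into the expression $T = O\bigl((\left\|\xx\right\|_0^2 + \left\|\xx^*\right\|_0^2) M^2 B^2 C^2 (\delta^{-1} + \log((f(\xx^0) - f(\xx^*))/\eps))\bigr)$. The dominant contribution is $\left\|\xx\right\|_0^2 = (s')^2 = \tO{s^4 \log^2 \frac{1}{\eps}}$, which dwarfs $\left\|\xx^*\right\|_0^2 = s^2$. Multiplying by $M^2 B^2 C^2 = \tO{1}$ and the residual $\tO{\log \frac{1}{\eps}}$ factor gives $T = \tO{s^4 \log^3 \frac{1}{\eps}}$, exactly as claimed.

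I do not anticipate any real obstacle: the entire corollary is algebraic bookkeeping on top of Theorem \ref{thm:sparse}. The only nontrivial point requiring a line of justification is verifying that the side-condition $M \geq \max\{\left\|\xx^*\right\|_1^{-1}, B^{-1}\}$ of Theorem \ref{thm:sparse} is compatible with $M \leq \tO{1}$; this is benign since it can always be enforced by rescaling the data (or handled in the degenerate regime where $\left\|\xx^*\right\|_1$ and $B$ are extremely small, in which case the problem is already near-trivial).
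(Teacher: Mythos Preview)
Your proposal is correct and follows essentially the same approach as the paper: the paper's proof of the corollary is just the one-line observation that Theorem~\ref{thm:sparse} plus the chain $\left\|\xx\right\|_0^2 \leq \tO{\left\|\xx^*\right\|_1^4\log^2\frac{1}{\eps}} \leq \tO{B^4\left\|\xx^*\right\|_0^4\log^2\frac{1}{\eps}} \leq \tO{\left\|\xx^*\right\|_0^4\log^2\frac{1}{\eps}}$ yields the stated bounds. Your write-up is slightly more explicit (fixing $\delta=0.1$, taking $\xx^0=\zerov$, noting the side-condition on $M$), but the substance is identical.
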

Corollary~\ref{cor:sparse} follows from Theorem~\ref{thm:sparse} by applying the following 
series of inequalities:
\begin{align*}
\left\|\xx\right\|_0^2 
& \leq \tO{\left\|\xx^*\right\|_1^4\log^2\frac{1}{\eps}} \\
& \leq \tO{B^4\left\|\xx^*\right\|_0^4\log^2\frac{1}{\eps}}\\ 
& \leq \tO{\left\|\xx^*\right\|_0^4\log^2\frac{1}{\eps}}\,.
\end{align*}
It is useful to compare these results to the results of~\cite{SSZ10} for sparse optimization of
general smooth convex functions. Even though those results achieve the stronger error bound of 
$f(\xx) \leq f(\xx^*) + \eps$, the sparsity of the final solution is
in the order of $s^2 \frac{m}{\eps}$, which has an exponentially worse error dependence than
$s^2 \log \frac{m}{\eps}$. Therefore, if the approximation rate $(1+\delta)$ is tolerable
in front of $f(\xx^*)$, then one can obtain exponentially faster sparsity and convergence.

If we are willing to perform fully corrective steps as described in Algorithm~\ref{alg:sparse2}, 
then we can get a cleaner and slightly simpler analysis without dependence on parameters $B,B_1, C$. 
This is presented in Theorem~\ref{thm:sparse2} and proved in Appendix~\ref{sec:proof_thm_sparse2}.
Fully corrective steps can be useful when there is an efficient (dense) optimization algorithm and one wishes to use it as a black box for sparse optimization. In practice, one does not need to perform a full
correction, but only a small number of corrective gradient steps over the current support
of the solution.
\begin{algorithm}
    \caption{Greedy coordinate descent with fully corrective steps}
    \begin{algorithmic}[1] %
        \Procedure{FullyCorrectiveGreedyCD($\xx^0, T$)} 
			\State Let $f(\xx) := \sum\limits_{i=1}^m \log(1 + e^{-b_i (\AA\xx)_i})$
			\State $S^0 \leftarrow \mathrm{supp}(\xx^0)$
			\For{$t=0,\dots,T-1$}
				\State $i' \leftarrow \mathrm{argmax}_i \left\{\left|\nabla_i f(\xx^t)\right|\right\}$
				\State $S^{t+1} \leftarrow S^t \cup \{i'\}$
				\State $\xx^{t+1} \leftarrow \underset{\xx: \mathrm{supp}(\xx) \subseteq S^{t+1}}{\mathrm{argmin}}\, f(\xx)$
			\EndFor
			\Return $\xx^T$
        \EndProcedure
    \end{algorithmic}
	\label{alg:sparse2}
\end{algorithm}
\begin{theorem}[Sparse logistic regression with fully corrective steps]
Given a binary classification instance $(\AA\in[-M,M]^{m\times n},\bb\in\{1,-1\}^m)$ 
and for any solution $\xx^*\in\mathbb{R}^n$,
Algorithm~\ref{alg:sparse2} 
with error tolerance $0 < \eps < m / 2$ and initial solution $\xx^0$
returns a solution $\xx$ with
\begin{align*}
f(\xx) \leq (1+\delta) \cdot f(\xx^*) + \eps
\end{align*}
and sparsity 
\begin{align*}
s' & := \left\|\xx\right\|_0 \\
& =
\left\|\xx^0\right\|_0 + O\left(\left\|\xx^*\right\|_1^2 M^2 \left(\frac{1}{\delta} + 
\log \frac{f(\xx^0) - f(\xx^*)}{\eps}\right)\right)
\end{align*}
in
$T = O\left(\left\|\xx^*\right\|_1^2 M^2 \left(\frac{1}{\delta} + 
\log \frac{f(\xx^0) - f(\xx^*)}{\eps}\right)\right)$
iterations, 
for any choice of $\delta \in (0,1)$.
Each iteration consists of evaluating the logistic regression gradient $\nabla f$, solving a logistic regression problem on at most $s'$ variables,
plus $O(m+n)$ additional time.
\label{thm:sparse2}
\end{theorem}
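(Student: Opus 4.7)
The plan is to exploit the fully corrective step to obtain a clean per-iteration contraction of the optimality gap $\Delta_t := f(\xx^t) - f(\xx^*)$, and then derive the iteration and sparsity bounds through a two-phase analysis. For the per-iteration progress, at iteration $t$ let $i' = \arg\max_i |\nabla_i f(\xx^t)|$ be the greedy choice and consider the virtual coordinate update $\txx = -\eta\,\nabla_{i'} f(\xx^t)\,\onev_{i'}$ with step size $\eta = 1/(2 M^2 f(\xx^t))$. Using the elementary bound $|\nabla_i f(\xx)| \leq M \sum_j (1-\sigma((\AA\xx)_j)) \leq M f(\xx)$ (the second inequality is the one already used in the proof of Lemma~\ref{lem:sum_2nd_derivative}), one checks that $\|\txx\|_1 \leq 1/(2M)$, so \eqref{eq:taylor2} together with multiplicative smoothness (Lemma~\ref{lem:multiplicative_smoothness}) yields
\begin{align*}
f(\xx^t + \txx) \leq f(\xx^t) - \eta \|\nabla f(\xx^t)\|_\infty^2 + M^2 f(\xx^t)\,\eta^2 \|\nabla f(\xx^t)\|_\infty^2 = f(\xx^t) - \frac{\|\nabla f(\xx^t)\|_\infty^2}{4 M^2 f(\xx^t)}.
\end{align*}
Since $\xx^t + \txx$ is supported on $S^{t+1}$ and $\xx^{t+1}$ minimizes $f$ over that support, the same bound applies to $f(\xx^{t+1})$.

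Next I would lower-bound the max gradient coordinate using the fully-corrective optimality condition. For $t \geq 1$ we have $\nabla_{S^t} f(\xx^t) = \zerov$, and hence $\langle \nabla f(\xx^t), \xx^t\rangle = 0$ since $\mathrm{supp}(\xx^t) \subseteq S^t$. Convexity then gives $-\langle \nabla f(\xx^t), \xx^*\rangle = \langle \nabla f(\xx^t), \xx^t - \xx^*\rangle \geq \Delta_t$, and Hölder yields $\|\nabla f(\xx^t)\|_\infty \geq \Delta_t / \|\xx^*\|_1$. Combined with the previous progress bound, this gives the master recurrence
\begin{align*}
\Delta_{t+1} \leq \Delta_t \left(1 - \frac{\Delta_t}{4 M^2 f(\xx^t) \|\xx^*\|_1^2}\right).
\end{align*}
(For $t=0$ the optimality condition need not hold, but $\Delta_1 \leq \Delta_0$ since $\xx^1$ minimizes $f$ over a support containing $\mathrm{supp}(\xx^0)$, so we may start the analysis at $t=1$ at no cost.)

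The remainder is a standard two-phase analysis. In the \emph{linear} phase where $\Delta_t \geq f(\xx^*)$, we have $f(\xx^t) \leq 2\Delta_t$, so the recurrence becomes $\Delta_{t+1} \leq \Delta_t(1 - 1/(8 M^2 \|\xx^*\|_1^2))$, producing geometric decay that reaches $\Delta_t \leq \max\{f(\xx^*),\eps\}$ in $T_1 = O(M^2 \|\xx^*\|_1^2 \log(\Delta_0/\eps))$ iterations; if $f(\xx^*)\leq\eps$ we are done. Otherwise we enter the \emph{sublinear} phase where $f(\xx^t) \leq 2 f(\xx^*)$ and the recurrence reads $\Delta_{t+1} \leq \Delta_t - \Delta_t^2/C$ with $C = 8 M^2 f(\xx^*)\|\xx^*\|_1^2$. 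The standard inversion $1/\Delta_{t+1} \geq 1/\Delta_t + 1/C$ gives $\Delta_t \leq C/(t-T_1)$, so $O(M^2 \|\xx^*\|_1^2 / \delta)$ further iterations suffice to drive $\Delta_t$ below $\delta f(\xx^*) \leq \delta f(\xx^*) + \eps$. Summing the two phases yields the claimed iteration count, and because each iteration adds at most one coordinate to $S^t$, the sparsity bound $\|\xx^T\|_0 \leq |S^T| \leq \|\xx^0\|_0 + T$ matches the statement.

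The main obstacle is assembling the two phases so that the iteration count takes the additive form $1/\delta + \log((f(\xx^0)-f(\xx^*))/\eps)$ rather than a multiplicative one; this requires partitioning on whether $f(\xx^*)$ or $\eps$ dominates and exploiting $\log(\Delta_0/f(\xx^*)) \leq \log(\Delta_0/\eps)$ in the sublinear branch (where by assumption $f(\xx^*) > \eps$). Verifying the Taylor radius constraint $\|\txx\|_1 \leq 1/(2M)$, the fully-corrective optimality identity, and the inversion of the sublinear recurrence are otherwise routine.
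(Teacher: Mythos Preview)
Your proposal is correct and follows essentially the same route as the paper: the paper also uses the fully-corrective identity $\langle\nabla f(\xx^t),\xx^t\rangle=0$ together with convexity and H\"older to get $\|\nabla f(\xx^t)\|_\infty\geq\Delta_t/\|\xx^*\|_1$, plugs this into the same one-coordinate progress bound $f(\xx^t)-f(\xx^{t+1})\geq\|\nabla f(\xx^t)\|_\infty^2/(4M^2 f(\xx^t))$, and then applies its auxiliary convergence lemma (Lemma~\ref{lem:aux_convergence}), whose proof is exactly your two-phase analysis. Your treatment of the $t=0$ case (where $\xx^0$ need not be fully corrected) is in fact slightly more careful than the paper's.
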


\section{Dense logistic regression }
\label{sec:dense}

In this section, our goal is to minimize the logistic function $f$ without any constraint on
the sparsity of the solution. 
The results of Section~\ref{sec:sparse} applied to a 
full sparsity of $n$ already imply 
Corollary~\ref{cor:dense}.

\begin{corollary}[Dense logistic regression]
Given a binary classification instance $(\AA\in[-M,M]^{m\times n},\bb\in\{-1,1\}^m)$ 
and for any solution $\xx^*\in[-B,B]^n$
with $M \geq \max\left\{\left\|\xx^*\right\|_1^{-1}, B^{-1}\right\}$,
Algorithm~\ref{alg:sparse}
with $\lambda_t=1$ for all $t$, initial solution $\xx^0\in\mathbb{R}^n$,
and error tolerance $0 < \eps < m / 2$ returns a solution $\xx$ with
\begin{align*}
f (\xx) \leq (1+\delta) \cdot f(\xx^*) + \eps
\end{align*}
in
\begin{align*}
T = O\left(n^2M^2B^2
\left(\frac{1}{\delta} + \log \frac{f(\xx^0) - f(\xx^*)}{\eps}\right)\right)\,.
\end{align*}
iterations, for any choice of $\delta \in (0,1)$. Additionally, 
$\left\|\xx\right\|_\infty \leq B + \frac{1}{2M}$.
Each iteration consists of evaluating the logistic regression gradient $\nabla f$ plus $O(m+n)$ additional time.
\label{cor:dense}
\end{corollary}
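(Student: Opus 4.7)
The plan is to derive Corollary~\ref{cor:dense} by running the analysis underlying Theorem~\ref{thm:sparse} with $\lambda_t = 1$ for every $t$, i.e., without any sparsity-oriented discount of zero-coordinate gradients. In this regime Algorithm~\ref{alg:sparse} reduces to pure greedy coordinate descent (with the $B$-guard rule) run with variable step size $\eta = (2 M^2 f(\xx^t))^{-1}$, and the iterates are no longer sparsity-controlled, so the sparsity bound of Theorem~\ref{thm:sparse} is replaced by the trivial $\|\xx\|_0 \leq n$, and symmetrically $\|\xx^*\|_0 \leq n$.

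Next, I would re-examine the proof of Theorem~\ref{thm:sparse} in Appendix~\ref{sec:proof_thm_sparse} and observe that the parameter $C$ enters only through the ratio $\|\xx^*\|_1 / B_1$, whose role is to account for the possible $\ell_1$ mismatch between target and iterate when the algorithm is forced to focus on zero coordinates. With $\lambda_t \equiv 1$ this mismatch disappears and the analysis goes through with $C$ effectively equal to $1$. Plugging $C = 1$ and $\|\xx\|_0, \|\xx^*\|_0 \leq n$ into the iteration bound yields the stated $T = O(n^2 M^2 B^2 (1/\delta + \log((f(\xx^0) - f(\xx^*))/\eps)))$, while the loss guarantee $f(\xx) \leq (1+\delta) f(\xx^*) + \eps$ is inherited unchanged.

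The only genuinely new piece is the $\ell_\infty$ bound $\|\xx\|_\infty \leq B + 1/(2M)$, which I would prove by induction on $t$ (assuming $\|\xx^0\|_\infty \leq B$, e.g., $\xx^0 = \zerov$) from two observations. First, each single-coordinate update has magnitude at most $1/(2M)$: using the inequality $1 - \sigma(t) \leq \log(1+e^{-t})$ (exactly the bound employed in Lemma~\ref{lem:sum_2nd_derivative}), we get $|\nabla_i f(\xx^t)| \leq M \sum_j (1 - \sigma((\AA\xx^t)_j)) \leq M f(\xx^t)$, so $\eta |\nabla_{i'} f(\xx^t)| \leq 1/(2M)$. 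Second, the algorithm's guard rule sets $\zeta_i = 0$ whenever $|x_i^t| \geq B$ and an update would grow $|x_i|$, so once $|x_i^t|$ reaches $B$ only magnitude-decreasing updates are admissible; under the assumption $M \geq B^{-1}$ a step of size at most $1/(2M) \leq B/2$ cannot flip $x_i$ across zero into a larger magnitude. Hence $|x_i|$ can exceed $B$ only by crossing from $|x_i^t| < B$ in a single step, and by the first observation this excess is at most $1/(2M)$.

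The main, and essentially only, technical obstacle is verifying that the proof of Theorem~\ref{thm:sparse} cleanly produces $C = 1$ in the $\lambda_t \equiv 1$ regime, i.e., that no step of the convergence argument implicitly relied on $B_1 \leq \|\xx^*\|_1$. The substantive analytic work (second-order robustness, multiplicative smoothness, and the potential-function argument) has already been carried out in Sections~\ref{sec:mul} and~\ref{sec:sparse}; Corollary~\ref{cor:dense} essentially reads off those results in the dense setting, augmented by the elementary $\ell_\infty$ calculation above.
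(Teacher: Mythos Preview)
Your proposal is correct and follows essentially the same route as the paper's proof. The paper applies Lemma~\ref{lem:coord_upd} directly with $\lambda=1$ (so the two bullets coincide), bounds $\|\xx^t\|_1 \le n\|\xx^t\|_\infty \le (3/2)nB$ via the $\ell_\infty$ guarantee already built into Lemma~\ref{lem:coord_upd}, and then carries out the two-phase (geometric then sublinear) convergence argument---which is exactly the content of Lemma~\ref{lem:aux_convergence} that you would invoke by re-running the Theorem~\ref{thm:sparse} analysis. One small note: the $\ell_\infty$ bound you call ``genuinely new'' is in fact already the last clause of Lemma~\ref{lem:coord_upd} (and is used inside the proof of Theorem~\ref{thm:sparse} to control $\|\xx^t\|_1$), so your inductive argument is a re-derivation of that clause rather than an additional ingredient.
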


Interestingly, even if we don't impose a sparsity constraint, the worst case analysis in Corollary~\ref{cor:dense} still obtains 
its bounds by using a greedy coordinate descent step, as in Section~\ref{sec:sparse}.
This is because it is a manifestation of $\ell_1$ gradient descent (aka steepest descent), which is rooted in the fact that the 
multiplicative smoothness of $f$ is with respect to the $\ell_1$ norm.
Greedy coordinate descent is favorable for sparse optimization, because it only updates one coordinate at a time. 
On the other hand,
based on practical intuitions, we would expect ($\ell_2$-based) gradient descent to perform better
than greedy coordinate descent if no sparsity constraint is imposed.
This is because greedy coordinate descent only updates one coordinate at a time, while gradient descent uses the full gradient.

In the rest of this section, we attempt to bridge this mismatch between worst case analysis and practice.
We show that, under appropriate assumptions, gradient descent can be proved to converge significantly faster than greedy coordinate descent.
We leave providing a theoretical grounding for these assumptions as an interesting open question for future research.
We now outline the core ideas.

\paragraph{$\ell_2$ multiplicative smoothness.} First, we can verify that the logistic loss does have the multiplicative
smoothness condition with respect to the $\ell_2$ norm, albeit in an almost trivial sense:
\begin{align*}
\langle \ww(\xx), (\AA\txx)^2\rangle 
& \leq \left\|\ww(\xx)\right\|_1 \left\|\AA\txx\right\|_\infty^2\\
& \leq f(\xx) \left\|\AA\right\|_{2\rightarrow\infty}^2 \left\|\txx\right\|_2^2\\
& \leq f(\xx) \beta \left\|\txx\right\|_2^2\,.
\end{align*}
Here, using the inequality $\left\|\AA\right\|_{2\rightarrow\infty}^2 \leq \left\|\AA\right\|_2^2 := \beta$
implies $\beta$-multiplicative smoothness with respect to the $\ell_2$ norm.
Unfortunately, this is not significantly better than the $\ell_1$ case: 
The number of iterations will be proportional to 
$\beta \left\|\xx^*\right\|_2^2$, which can be $\gg m$.
As it turns out, however, many
real logistic regression instances exhibit the $\ell_2$ multiplicative smoothness property
with significantly better constants. In our experiments we found that along the path 
of gradients encountered by gradient descent in a variety of instances, the following property was true:
\begin{align*}
\left\langle \ww(\xx), \left(\AA \nabla f(\xx)\right)^2\right\rangle
\leq f(\xx) \beta m^{-1} \left\|\nabla f(\xx)\right\|_2^2
\end{align*}
This is an \emph{effective} $\beta m^{-1}$-multiplicative smoothness property, because it is only assumed to be true
for $\xx$'s encountered by the gradient descent algorithm. As such, it is an empirical property.
In order to check our hypothesis, we have run the gradient descent algorithm with the step sizes
that are implied by Theorem~\ref{thm:dense_l2}, which we will see later.
For each of the 15 experiments, we have run gradient descent for 1000 iterations, and calculated the
maximum of the following quantity, over all iterations:
\begin{align*}
\frac{\left\langle \ww(\xx), \left(\AA \nabla f(\xx)\right)^2\right\rangle}
{f(\xx) m^{-1} \left\|\AA \nabla f(\xx)\right\|_2^2}\,.
\end{align*}
If this is bounded by $1$, and using the fact that 
$\left\|\AA \nabla f(\xx)\right\|_2^2 \leq \beta \left\|\nabla f(\xx)\right\|_2^2$, this implies that
$f$ is effectively $\beta m^{-1}$-multiplicatively smooth with respect to the $\ell_2$ norm. Indeed,
as we can see in Table~\ref{table}, these values are indeed less than $1$ for all datasets and all
iterations. 

\begin{table}[tb]
\caption{Upper bounds on the quantity 
$\left\langle \ww(\xx), \left(\AA \nabla f(\xx)\right)^2\right\rangle
 / \left(f(\xx) m^{-1} \left\|\AA \nabla f(\xx)\right\|_2^2\right)$. 
 Shown here is the maximum of this over $\xx$ being one of the first $1000$ iterates
 starting from $\xx^0=\zerov$.}
\label{table}
\begin{center}
\begin{tabular}{lr}
\multicolumn{1}{c}{Dataset}  &\multicolumn{1}{c}{Max ratio}
\\ \hline \\
letter & $0.40$\\
rcv1.test & $0.36$\\
ijcnn1 & $0.47$\\
vehv2binary & $0.37$\\
magic04 & $0.37$\\
skin & $0.44$\\
w8all & $0.40$\\
shuttle.binary & $0.37$\\
kddcup04.phy & $0.36$\\
kddcup04.bio & $0.48$\\
census & $0.50$\\
adult & $0.40$\\
poker & $0.36$\\
nomao & $0.50$\\
covtype & $0.36$
\end{tabular}
\end{center}
\end{table}

\ifx 0 
\begin{table*}[tb]
\caption{Upper bounds on the quantity 
$\left\langle \ww(\xx), \left(\AA \nabla f(\xx)\right)^2\right\rangle
 / \left(f(\xx) m^{-1} \left\|\AA \nabla f(\xx)\right\|_2^2\right)$. 
 Shown here is the maximum of this over $\xx$ being one one of the first $1000$ iterates.}
\label{table}
\begin{center}
\begin{tabular}{ll}
\multicolumn{1}{c}{Dataset}  &\multicolumn{1}{c}{Max ratio}
\\ \hline \\
letter & $0.40$\\
rcv1.test & $0.36$\\
ijcnn1 & $0.47$\\
vehv2binary & $0.37$\\
magic04 & $0.37$\\
\end{tabular}
\begin{tabular}{ll}
\multicolumn{1}{c}{Dataset}  &\multicolumn{1}{c}{Max ratio}
\\ \hline \\
skin & $0.44$\\
w8all & $0.40$\\
shuttle.binary & $0.37$\\
kddcup04.phy & $0.36$\\
kddcup04.bio & $0.48$
\end{tabular}
\begin{tabular}{ll}
\multicolumn{1}{c}{Dataset}  &\multicolumn{1}{c}{Max ratio}
\\ \hline \\
census & $0.50$\\
adult & $0.40$\\
poker & $0.36$\\
nomao & $0.50$\\
covtype & $0.36$
\end{tabular}
\end{center}
\end{table*}
\fi

In the following, our plan is to prove convergence, 
\emph{assuming} that $f$ has the multiplicative smoothness property with the constants 
in our hypothesis above. Under this assumption,
we can now prove a much stronger convergence theorem (here we are also using the fact that
$M^2 \leq \beta$ to replace $2M$- by $2\sqrt{\beta}$-second order robustness):
\begin{theorem}
Let $f:\mathbb{R}^n\rightarrow\mathbb{R}_{>0}$ be a convex function that is 
$\gamma$-second order robust and 
$\mu$-multiplicatively smooth 
with respect to the $\ell_2$ norm. 
Let $\xx^0\in\mathbb{R}^n$ be an initial solution and $\xx^*\in\mathbb{R}^n$ be an arbitrary solution,
where $R := \left\|\xx^0 - \xx^*\right\|_2$.
Then, gradient descent with step size 
$\eta_t = 
\min\left\{\frac{1}{2\mu f(\xx^t)}, \frac{1}{\gamma \left\|\nabla f(\xx^t)\right\|_2}\right\}$
returns a solution with
\begin{align*}
f(\xx) \leq (1+\delta) f(\xx^*) + \eps
\end{align*}
after
\begin{align*}
T = O\Big(& \mu R^2\left(\frac{1}{\delta} 
+ \log \frac{f(\xx^0) - f(\xx^*)}{\eps}\right)\\
& + \gamma R \log \frac{f(\xx^0) - f(\xx^*)}{\eps}
\Big)
\end{align*}
iterations.
If $\gamma \leq 2\sqrt{\beta}$ and $\mu \leq \beta m^{-1}$ for some $\beta > 0$, then
the number of iterations becomes
\begin{align*}
T = O\Big(
& \frac{\beta R^2}{m}\left(\frac{1}{\delta} + \log \frac{f(\xx^0) - f(\xx^*)}{\eps}\right)\\
& + \sqrt{\beta} R \log \frac{f(\xx^0) - f(\xx^*)}{\eps}
\Big)
\end{align*}
\label{thm:dense_l2}
\end{theorem}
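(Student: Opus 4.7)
The plan is to combine the second-order robustness and multiplicative smoothness inequalities (now in the $\ell_2$ geometry hypothesized in the theorem) with a two-phase analysis: a linear phase while $D_t := f(\xx^t) - f(\xx^*) \geq f(\xx^*)$ and a sub-linear phase once $D_t$ drops below $f(\xx^*)$. The two step-size caps $(2\mu f(\xx^t))^{-1}$ and $(\gamma \|\nabla f(\xx^t)\|_2)^{-1}$ play distinct roles: the first controls the multiplicative-smoothness remainder, the second ensures the Hessian does not change too much along the step.

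\textbf{One-step descent and non-expansiveness.} Starting from the Taylor expansion (\ref{eq:taylor}) with step $\txx = -\eta_t \nabla f(\xx^t)$, the constraint $\eta_t \|\nabla f(\xx^t)\|_2 \leq 1/\gamma$ places $\oxx$ in a $1/\gamma$-ball around $\xx^t$, so $\gamma$-second-order robustness gives $\tfrac{1}{2}\txx^\top \nabla^2 f(\oxx) \txx \leq \txx^\top \nabla^2 f(\xx^t) \txx$. Then $\mu$-multiplicative smoothness with $\txx = \nabla f(\xx^t)$ upper bounds the Hessian quadratic by $\mu f(\xx^t) \|\nabla f(\xx^t)\|_2^2$, and combined with $\eta_t \mu f(\xx^t) \leq 1/2$ this yields
\[ f(\xx^t) - f(\xx^{t+1}) \geq \tfrac{\eta_t}{2}\|\nabla f(\xx^t)\|_2^2 = \min\!\Bigl\{\tfrac{\|\nabla f(\xx^t)\|_2^2}{4\mu f(\xx^t)},\, \tfrac{\|\nabla f(\xx^t)\|_2}{2\gamma}\Bigr\}. \]
Expanding $\|\xx^{t+1} - \xx^*\|_2^2$ and using convexity gives $\|\xx^{t+1}-\xx^*\|_2^2 \leq \|\xx^t-\xx^*\|_2^2 - 2\eta_t D_t + \eta_t^2\|\nabla f(\xx^t)\|_2^2$; since the per-step progress above is at most $D_t$, the last two terms combine to something non-positive, so $\|\xx^t - \xx^*\|_2 \leq R$ throughout. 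Convexity then yields the PL-type bound $\|\nabla f(\xx^t)\|_2 \geq D_t/R$.

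\textbf{Two-phase convergence.} Substituting the gradient lower bound,
\[ D_t - D_{t+1} \geq \min\!\Bigl\{\tfrac{D_t^2}{4\mu R^2 f(\xx^t)},\, \tfrac{D_t}{2\gamma R}\Bigr\}. \]
In the linear phase $D_t \geq f(\xx^*)$, one has $f(\xx^t) \leq 2D_t$, so the first term is at least $D_t/(8\mu R^2)$; both bounds are linear in $D_t$, so $D$ contracts geometrically with rate $\min\{(8\mu R^2)^{-1},(2\gamma R)^{-1}\}$, finishing in $O\bigl((\mu R^2 + \gamma R)\log(D_0/\max(f(\xx^*),\eps))\bigr)$ iterations. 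In the sub-linear phase $D_t \leq f(\xx^*)$, $f(\xx^t) \leq 2f(\xx^*)$ gives progress $\geq D_t^2/(8\mu R^2 f(\xx^*))$, and the reciprocal telescoping trick $1/D_{t+1} - 1/D_t \geq (D_t-D_{t+1})/D_t^2 \geq 1/(8\mu R^2 f(\xx^*))$ yields $D \leq 8\mu R^2 f(\xx^*)/T_2$ after $T_2$ sub-linear iterations.

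\textbf{Assembly and the main obstacle.} The target is $D \leq \delta f(\xx^*) + \eps$. If $\delta f(\xx^*) \geq \eps$, aiming for $\delta f(\xx^*)$ in the sub-linear phase costs $T_2 = O(\mu R^2/\delta)$; if $\delta f(\xx^*) < \eps$, the target is $\eps$ and $\mu R^2 f(\xx^*)/\eps \leq \mu R^2/\delta$, again giving $T_2 = O(\mu R^2/\delta)$. The linear phase only has to drive $D_t$ below $\max(f(\xx^*),\eps)$, and $\log(D_0/\max(f(\xx^*),\eps)) \leq \log(D_0/\eps)$, so the total is $O\bigl(\mu R^2(1/\delta + \log(D_0/\eps)) + \gamma R\log(D_0/\eps)\bigr)$; the corollary with $\gamma \leq 2\sqrt{\beta}$ and $\mu \leq \beta/m$ is pure substitution. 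The genuinely subtle point is the Phase-2 case split: read naively, the sub-linear tail rate produces a $\mu R^2 f(\xx^*)/\eps$ contribution that would spoil the $\log(1/\eps)$ dependence, and the clean cancellation relies on the observation that whenever $\eps$ would dominate one has $\eps > \delta f(\xx^*)$, so the $1/\delta$ term absorbs the would-be sub-linear overhead.
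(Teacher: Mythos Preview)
Your one-step descent inequality and the non-expansiveness argument are correct and coincide with the paper's gradient-update lemma, and your linear-phase bound is fine. The gap is in the sub-linear phase. You write ``$f(\xx^t)\leq 2f(\xx^*)$ gives progress $\geq D_t^2/(8\mu R^2 f(\xx^*))$,'' but this only lower-bounds the \emph{first} branch of the $\min$; the actual guarantee is
\[
D_t - D_{t+1} \;\geq\; \min\Bigl\{\tfrac{D_t^2}{8\mu R^2 f(\xx^*)},\ \tfrac{D_t}{2\gamma R}\Bigr\},
\]
and nothing rules out the second branch being the smaller one here. Indeed the $\gamma$-cap is active precisely when $2\mu R f(\xx^t)\leq \gamma D_t$, which is compatible with $D_t\leq f(\xx^*)$ whenever $\gamma\geq 2\mu R$; on such iterations the progress can be strictly smaller than $D_t^2/(8\mu R^2 f(\xx^*))$, so your reciprocal-telescoping count $T_2=O(\mu R^2/\delta)$ does not cover them.

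The paper avoids this by decomposing not by the size of $D_t$ relative to $f(\xx^*)$ but by \emph{which branch of the $\min$ is active}. Iterations on the first branch satisfy $D_t - D_{t+1}\geq D_t^2/(4\mu R^2 f(\xx^t))$ and are counted by exactly your two-phase argument (packaged as Lemma~\ref{lem:aux_convergence}), yielding $O(\mu R^2(1/\delta+\log(D_0/\eps)))$. Iterations on the second branch satisfy $D_{t+1}\leq(1-(2\gamma R)^{-1})D_t$ \emph{globally}, so across the entire run they number at most $O(\gamma R\log(D_0/\eps))$ regardless of phase, since $D$ is non-increasing and we stop once $D\leq \delta f(\xx^*)+\eps\geq \eps$. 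Your argument becomes complete once you separate out the second-branch iterations in Phase~2 and absorb them into this $\gamma R\log(D_0/\eps)$ budget; as written, the step that silently drops the $\gamma$-cap in the sub-linear regime is the missing piece.
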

Theorem~\ref{thm:dense_l2} is proved in Appendix~\ref{sec:proof_thm_dense_l2}.

\section{Numerical Example}

In order to numerically validate our algorithm, we run logistic regression on the 
UCI adult binary
classification dataset. In order to simulate a separable dataset, we first run gradient descent
on the whole data, and then discard the misclassified data points. This gives us a separable dataset.
Then, we run two variants of gradient descent: One with constant step size
given by $\beta^{-1}$, and one with increasing step size given by 
$\eta_t = \beta^{-1} f(\xx^0) / f(\xx^t)$, with no other change. This is motivated by our findings,
which suggest that the step size should increase proportionally to the decrease of the loss. As we
can see in Figure~\ref{fig:adult}, the error in the case of fixed step size decays as 
$\mathrm{poly}(1/t)$, while
in the case of increasing step size we have linear convergence (albeit with a low rate because the 
margins are in the order of $10^{-6}$).
\begin{figure}[htp]
\begin{center}
\includegraphics[width=\columnwidth]{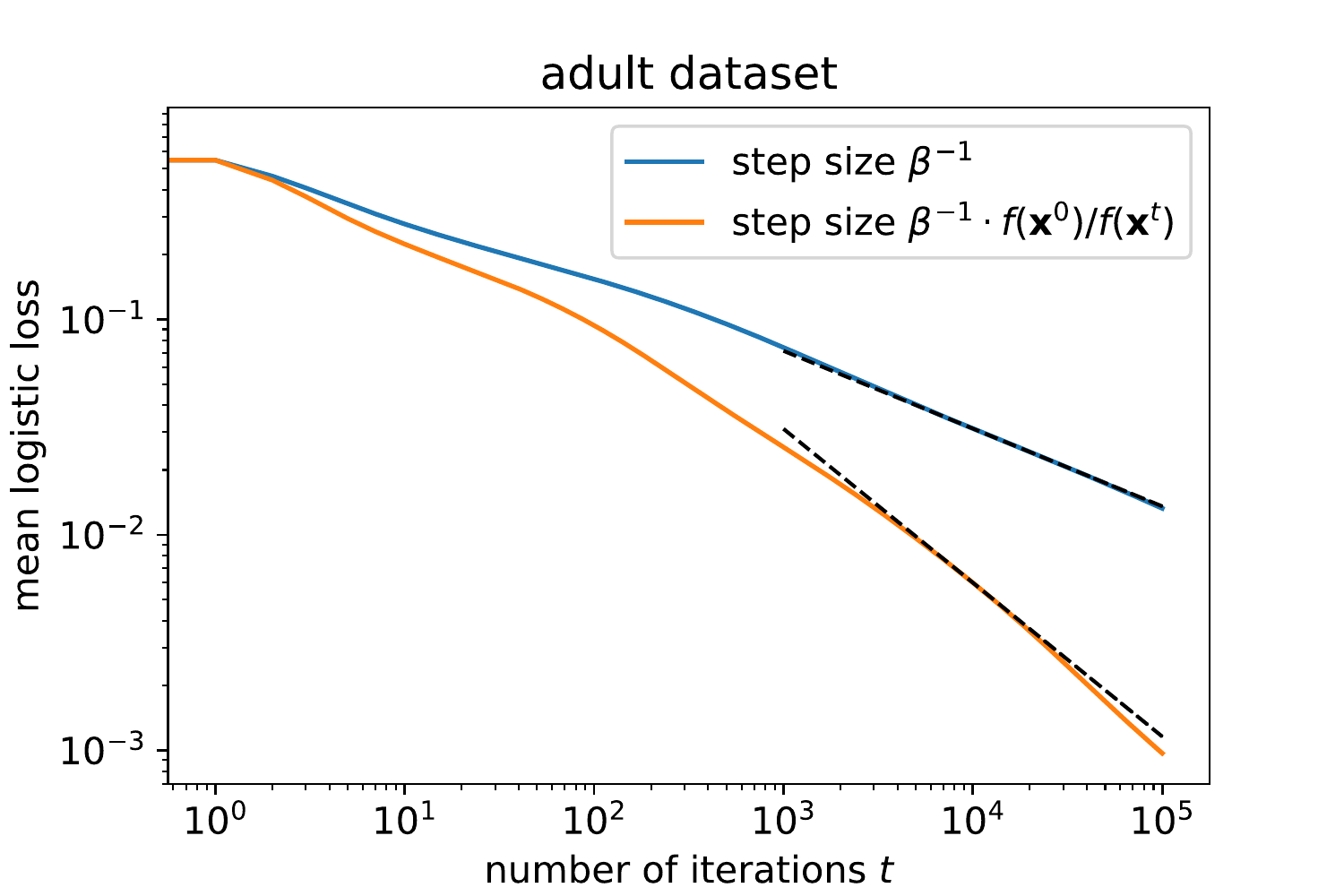}
\end{center}
\caption{Comparison of fixed vs increasing step size on logistic regression on adult dataset}
\label{fig:adult}
\end{figure}

\section{Acknowledgments}
We would like to thank the anonymous reviewers for valuable feedback, and pointing out 
a simplification to the choice of step size in Algorithm~\ref{alg:sparse}.

\FloatBarrier

\bibliography{main}
\bibliographystyle{icml2023}

\newpage
\appendix
\onecolumn

\section{Missing Proofs from Section~\ref{sec:sparse}}

\subsection{Main lemma on coordinate updates}

\begin{lemma}[Gradient lower bound]
Let $f:\mathbb{R}^n\rightarrow \mathbb{R}$ be a differentiable convex function and let
$\xx\in[-B',B']^n$, $\xx^*\in[-B,B]^n$ 
be two solutions %
for some parameters $B' \geq B > 0$.
For all $i\in[n]$ we define 
\[ \zeta_i = \begin{cases}
	\lambda & \text{ if $x_i = 0$}\\
	0 & \text{ if $|x_i| \geq B$ and $\nabla_i f(\xx) \cdot x_i < 0$}\\
	1 & \text{ otherwise}
\end{cases} \]
where $0 < \lambda \leq 1$, 
and let $i^* = \mathrm{argmax}_i \left\{\zeta_i \left|\nabla_i f(\xx)\right|\right\}$. 
Then, at least one of the following is true:
\begin{align*}
& \bullet \,\,\,\,\,\,\, \left|\nabla_{i^*} f(\xx)\right| 
\geq 
\frac{f(\xx) - f(\xx^*)}{\left\|\xx^*\right\|_1 + \lambda \left\|\xx\right\|_1}\\
&  \bullet \,\,\,\,\,\,\,\left|\nabla_{i^*} f(\xx)\right| \geq \frac{f(\xx) - f(\xx^*)}{\lambda^{-1}\left\|\xx^*\right\|_1 + \left\|\xx\right\|_1}  \text{ and } x_{i^*}\neq 0 \,.
\end{align*}
\label{lem:grad_lb}
\end{lemma}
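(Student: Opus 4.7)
}
The plan is to start from the convexity inequality
\[
f(\xx)-f(\xx^*)\le\langle\nabla f(\xx),\xx-\xx^*\rangle=\sum_{i}\nabla_i f(\xx)\,(x_i-x_i^*),
\]
and then show, one class of coordinates at a time, that every term can be bounded in terms of $|\nabla_{i^*}f(\xx)|$ using the greedy maximality of $i^*$, with the total multiplier matching one of the two claimed denominators.

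The first step is to discard the coordinates with $\zeta_i=0$. For such $i$ we have $|x_i|\ge B\ge|x_i^*|$ and $\nabla_i f(\xx)\cdot x_i<0$, which forces $x_i$ and $x_i^*$ to have opposite signs (or $x_i^*=0$) and hence $\nabla_i f(\xx)(x_i-x_i^*)\le 0$. Thus these terms can simply be dropped from the right-hand side. For coordinates with $\zeta_i=\lambda$ we use $x_i=0$ so the term becomes $-\nabla_i f(\xx)\,x_i^*\le|\nabla_i f(\xx)|\,|x_i^*|$; for $\zeta_i=1$ we bound by $|\nabla_i f(\xx)|(|x_i|+|x_i^*|)$. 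This yields
\[
f(\xx)-f(\xx^*)\le\sum_{i:\zeta_i=\lambda}|\nabla_i f(\xx)|\,|x_i^*|+\sum_{i:\zeta_i=1}|\nabla_i f(\xx)|\,(|x_i|+|x_i^*|).
\]

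The second step is a case split on $\zeta_{i^*}$. By greediness, $\zeta_i|\nabla_i f(\xx)|\le\zeta_{i^*}|\nabla_{i^*}f(\xx)|$ for every $i$. If $\zeta_{i^*}=\lambda$ (so $x_{i^*}=0$), then for coordinates with $\zeta_i=\lambda$ we get $|\nabla_i f(\xx)|\le|\nabla_{i^*}f(\xx)|$ and for $\zeta_i=1$ we get $|\nabla_i f(\xx)|\le\lambda\,|\nabla_{i^*}f(\xx)|$. Plugging in and regrouping the $|x_i^*|$ contributions (using $\lambda\le 1$) gives
\[
f(\xx)-f(\xx^*)\le|\nabla_{i^*}f(\xx)|\bigl(\|\xx^*\|_1+\lambda\|\xx\|_1\bigr),
\]
which is exactly the first alternative. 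If instead $\zeta_{i^*}=1$ (so $x_{i^*}\ne 0$), the bounds flip: for $\zeta_i=\lambda$ we get $|\nabla_i f(\xx)|\le\lambda^{-1}|\nabla_{i^*}f(\xx)|$, and for $\zeta_i=1$ we get $|\nabla_i f(\xx)|\le|\nabla_{i^*}f(\xx)|$, and the analogous regrouping (now using $\lambda^{-1}\ge 1$) produces
\[
f(\xx)-f(\xx^*)\le|\nabla_{i^*}f(\xx)|\bigl(\lambda^{-1}\|\xx^*\|_1+\|\xx\|_1\bigr),
\]
which is the second alternative and comes with $x_{i^*}\ne 0$ by hypothesis of the case.

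The main obstacle is the step that shows $\zeta_i=0$ terms are harmless: this is where the precise definition of $\zeta_i$ (both the sign condition $\nabla_i f(\xx)\cdot x_i<0$ and the magnitude condition $|x_i|\ge B\ge|x_i^*|$) is used to turn what could be a large positive contribution into a nonpositive one. Everything after that is bookkeeping: a case split on $\zeta_{i^*}$ and two applications of the greedy inequality, with care that the $\|\xx^*\|_1$ and $\|\xx\|_1$ pieces do not get inflated by the $\lambda,\lambda^{-1}$ factors beyond what the two alternatives allow; the observation $\lambda\le 1\le\lambda^{-1}$ is exactly what lets the mixed $|x_i^*|$ sums across the two $\zeta_i$-classes telescope into a single $\|\xx^*\|_1$ (respectively $\lambda^{-1}\|\xx^*\|_1$) in each case.
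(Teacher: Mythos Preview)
Your approach is essentially the same as the paper's: start from convexity, show the coordinates with $\zeta_i=0$ contribute nonpositively and can be dropped, then do a case split on whether $i^*$ lies in the support of $\xx$ and use greediness to bound the remaining terms. The paper phrases the bookkeeping via the sets $S=\{i:x_i\neq 0\}$ and $F=\{i:\zeta_i\neq 0\}$ and the quantities $\|\nabla_F f(\xx)\|_\infty$, $\|\nabla_{S\cap F} f(\xx)\|_\infty$, but the content is the same.

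One small slip: your justification for dropping the $\zeta_i=0$ terms asserts that ``$x_i$ and $x_i^*$ have opposite signs (or $x_i^*=0$)'', which does not follow---e.g.\ $x_i=B$, $x_i^*=B/2$, $\nabla_i f(\xx)<0$ satisfies all hypotheses but has $x_i,x_i^*$ of the same sign. The correct argument (which the paper uses) is that $|x_i|\ge B\ge |x_i^*|$ forces $x_i-x_i^*$ to have the same sign as $x_i$, hence the opposite sign to $\nabla_i f(\xx)$, giving $\nabla_i f(\xx)(x_i-x_i^*)\le 0$. With this fix your proof is complete.
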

\begin{proof}
Let $S = \{i\ |\ x_i \neq 0\}$ and $F = \{i\ |\ |x_i|<B \text{ or } \nabla_i f(\xx) \cdot x_i \geq 0\}$.
By convexity of $f$, we have
\begin{align*}
f(\xx^*) 
& \geq f(\xx) + \langle \nabla f(\xx), \xx^* - \xx\rangle \\
& \geq f(\xx) + \langle \nabla_F f(\xx), \xx^* - \xx\rangle \\
& = f(\xx) + \langle \nabla_F f(\xx), \xx^*\rangle - \langle \nabla_{S\cap F} f(\xx), \xx\rangle \\
& \geq f(\xx) - \left\|\nabla_F f(\xx)\right\|_\infty \left\|\xx^*\right\|_1 
- \left\|\nabla_{S\cap F} f(\xx)\right\|_\infty \left\|\xx\right\|_1\,,
\end{align*}
where the first inequality holds because for any $i\in[n]\backslash F$, by the fact that
$\left|x_i^*\right| \leq B$ and the definition of $F$,
\[ 
\nabla_i f(\xx) \left(\xx^* - \xx\right)_i
\geq \left|\nabla_i f(\xx)\right| \left(-B + B\right) = 0\,.
\]

Therefore
\begin{align}
\left\|\nabla_F f(\xx)\right\|_\infty \left\|\xx^*\right\|_1 
+ \left\|\nabla_{S\cap F} f(\xx)\right\|_\infty \left\|\xx\right\|_1 \geq 
f(\xx) - f(\xx^*)\,.
\label{eq:grad_lb_1}
\end{align}
Now, if $i^*\notin S$, which also implies $x_{i^*} = 0$, 
by definition of the $\zeta_i$'s and $i^*$ we have
\begin{align*}
\lambda \left\|\nabla_F f(\xx)\right\|_\infty 
= \lambda \left|\nabla_{i^*} f(\xx)\right| 
\geq \left\|\nabla_{S\cap F} f(\xx)\right\|_\infty\,.
\end{align*}
and so (\ref{eq:grad_lb_1}) implies
\begin{align*}
& \left|\nabla_{i^*} f(\xx)\right| \left\|\xx^*\right\|_1 
+ \lambda \left|\nabla_{i^*} f(\xx)\right| \left\|\xx\right\|_1 \geq f(\xx) - f(\xx^*)\\
\Rightarrow & \left|\nabla_{i^*} f(\xx)\right| \geq \frac{f(\xx) - f(\xx^*)}{\left\|\xx^*\right\|_1 + \lambda \left\|\xx\right\|_1}\,.
\end{align*}
Otherwise if $i^*\in S$, we have
\begin{align*}
\left\|\nabla_{S\cap F} f(\xx)\right\|_\infty 
= \left|\nabla_{i^*} f(\xx)\right| 
\geq \lambda \left\|\nabla_F f(\xx)\right\|_\infty\,,
\end{align*}
and so (\ref{eq:grad_lb_1}) implies 
\begin{align*}
& \lambda^{-1} \left|\nabla_{i^*} f(\xx)\right| \left\|\xx^*\right\|_1 + \left|\nabla_{i^*} f(\xx)\right| \left\|\xx\right\|_1 \geq f(\xx) - f(\xx^*)\\
\Rightarrow & \left|\nabla_{i^*} f(\xx)\right| \geq \frac{f(\xx) - f(\xx^*)}{\lambda^{-1} \left\|\xx^*\right\|_1 + \left\|\xx\right\|_1}\,.
\end{align*}
\end{proof}

\begin{lemma}[Coordinate update]
Let $f:\mathbb{R}^n\rightarrow \mathbb{R}_{> 0}$ be a twice continuously differentiable convex function that is $2\gamma$-second order robust and $\gamma^2$-multiplicatively smooth
with respect to the $\ell_1$ norm, for some $\gamma > 0$.
Let $\xx\in[-B',B']^n$ be a suboptimal solution such that 
$f(\xx) \geq f(\xx^*)$, 
where $\xx^*\in[-B,B]^n$ is some unknown solution
with $\gamma \left\|\xx^*\right\|_1 \geq 1$,
and $B' \geq B > 0$ are some parameters.
We make the update
\begin{align*}
\xx' = \xx - \eta \nabla_i f(\xx) \onev_i\,,
\end{align*}
where
$i$ is picked as in Lemma~\ref{lem:grad_lb} for some parameter $\lambda \in(0,1)$ and
$\eta = 0.5 \min\left\{\frac{1}{\gamma^2 f(\xx)}, \frac{1}{\gamma |\nabla_i f(\xx)|}\right\}$
is a step size.
Then, at least one of the following is true about the progress in decreasing $f$:
\begin{align*}
&\bullet \,\,\,\,\,\,\, f(\xx) - f(\xx') \geq \frac{\left(f(\xx) - f(\xx^*)\right)^2}{4 \gamma^2 f(\xx) \left(\left\|\xx^*\right\|_1 + \lambda\left\|\xx\right\|_1\right)^2} \\
&\bullet \,\,\,\,\,\,\, f(\xx) - f(\xx') \geq \frac{\left(f(\xx) - f(\xx^*)\right)^2}{4 \gamma^2 f(\xx) \left(\lambda^{-1} \left\|\xx^*\right\|_1 + \left\|\xx\right\|_1\right)^2} \text{ and } x_i \neq 0 \,,
\end{align*}
and the norm of the new solution is bounded as
$\left\|\xx'\right\|_\infty \leq \max\, \{B', B + \frac{1}{2\gamma}\}$.
In the case that $f(\xx) < f(\xx^*)$ we have $f(\xx') \leq f(\xx)$.
\label{lem:coord_upd}
\end{lemma}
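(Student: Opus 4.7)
My starting point is the second-order Taylor expansion~(\ref{eq:taylor2}) applied to the one-coordinate step $\xx' = \xx - \eta\nabla_i f(\xx)\onev_i$. To license that inequality I need $\left\|\xx' - \xx\right\|_1 = \eta|\nabla_i f(\xx)| \leq 1/(2\gamma)$, which the branch $\eta \leq 1/(2\gamma|\nabla_i f(\xx)|)$ of the minimum defining $\eta$ guarantees. The linear term equals $-\eta(\nabla_i f(\xx))^2$, and multiplicative smoothness applied in the direction $\onev_i$ (whose $\ell_1$ norm is $1$) bounds the quadratic term by $\eta^2\gamma^2 f(\xx)(\nabla_i f(\xx))^2$. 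The other branch $\eta \leq 1/(2\gamma^2 f(\xx))$ forces $\eta\gamma^2 f(\xx) \leq 1/2$, so the quadratic is at most half the linear in magnitude and I obtain the baseline progress bound $f(\xx) - f(\xx') \geq \tfrac{1}{2}\eta(\nabla_i f(\xx))^2$. This inequality makes no reference to $\xx^*$, so it already delivers the side claim $f(\xx') \leq f(\xx)$ when $f(\xx) < f(\xx^*)$.

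Next, I would plug in Lemma~\ref{lem:grad_lb} and split on which branch of the minimum realizes $\eta$. When $\eta = 1/(2\gamma^2 f(\xx))$, the baseline bound becomes $(\nabla_i f(\xx))^2/(4\gamma^2 f(\xx))$; substituting the first or second bullet of Lemma~\ref{lem:grad_lb} for $|\nabla_i f(\xx)|$ reproduces the corresponding bullet of the statement verbatim. The case $\eta = 1/(2\gamma|\nabla_i f(\xx)|)$, which is active when $|\nabla_i f(\xx)| \geq \gamma f(\xx)$, is more delicate: the progress is only linear in $|\nabla_i f(\xx)|$, equal to $|\nabla_i f(\xx)|/(4\gamma)$. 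Applying Lemma~\ref{lem:grad_lb} once and cross-multiplying against the claimed squared bound reduces, in the first-bullet regime, to the inequality $\gamma f(\xx)\left(\left\|\xx^*\right\|_1 + \lambda\left\|\xx\right\|_1\right) \geq f(\xx) - f(\xx^*)$, and to an analogous inequality in the second-bullet regime. The precondition $\gamma\left\|\xx^*\right\|_1 \geq 1$ immediately yields $\gamma f(\xx)\left\|\xx^*\right\|_1 \geq f(\xx) \geq f(\xx) - f(\xx^*)$, closing both regimes.

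Finally, I would verify the entry-wise bound $\left\|\xx'\right\|_\infty \leq \max\{B', B + 1/(2\gamma)\}$ by inspecting the index $i$ returned by the argmax. Coordinates untouched by the update already satisfy $|x_j| \leq B'$, so it suffices to control $|x'_i|$. A coordinate with $\zeta_i = 0$ can only be chosen when the entire argmax objective vanishes, in which case $\nabla_i f(\xx) = 0$ and $\xx' = \xx$. Otherwise $\zeta_i \neq 0$, so either $|x_i| < B$ and $|x'_i| \leq B + \eta|\nabla_i f(\xx)| \leq B + 1/(2\gamma)$, or $\nabla_i f(\xx)\cdot x_i \geq 0$, in which case the update either keeps $x_i$ on the same side of zero (giving $|x'_i| \leq |x_i| \leq B'$) or flips its sign (giving $|x'_i| \leq \eta|\nabla_i f(\xx)| \leq 1/(2\gamma)$). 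I expect the main obstacle to be the squared-versus-linear mismatch in the small-$\eta$ regime of the second paragraph: the precondition $\gamma\left\|\xx^*\right\|_1 \geq 1$, which looks like a harmless normalization, is exactly what converts the linear per-step guarantee back into the quadratic form appearing in the statement, and correctly isolating this dependency is the subtle part of the bookkeeping.
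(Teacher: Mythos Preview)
Your proposal is correct and follows the paper's proof essentially step for step: Taylor expansion plus second-order robustness to get~(\ref{eq:taylor2}), multiplicative smoothness to bound the quadratic, the resulting $\min\{(\nabla_i f(\xx))^2/(4\gamma^2 f(\xx)),\,|\nabla_i f(\xx)|/(4\gamma)\}$ progress, then Lemma~\ref{lem:grad_lb} together with $\gamma\|\xx^*\|_1\geq 1$ and $f(\xx)-f(\xx^*)\leq f(\xx)$ to collapse the minimum to the first branch, and finally the same $\zeta_i$-based case split for the $\ell_\infty$ bound. One small slip: in the degenerate case where the argmax selects an index with $\zeta_i=0$, your claim that then $\nabla_i f(\xx)=0$ does not follow (the product $\zeta_i|\nabla_i f(\xx)|$ vanishes because $\zeta_i=0$, not because the gradient does); the paper simply assumes the argmax returns an index with $\zeta_i\neq 0$, which is the natural tie-breaking convention and what you should invoke as well.
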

\begin{proof}
We first consider a generic update $\xx' = \xx + \txx$. By Taylor's theorem
and the fact that $f$ is twice continuously differentiable, we have
\begin{align*}
f(\xx') = f(\xx) + \langle \nabla f(\xx), \txx \rangle + \frac{1}{2} \langle \txx, \nabla^2 f(\oxx) \txx\rangle \,,%
\end{align*}
for some $\oxx$ that is entrywise between $\xx$ and $\xx'$.

Since $f$ is $2\gamma$-second-order-robust and $\gamma^2$-multiplicatively-smooth with respect to the $\ell_1$ norm,
as long as the update is bounded in $\ell_1$ norm as 
\begin{align}
\left\|\txx\right\|_1 \leq 1/(2\gamma)
\label{eq:l1_neigh}
\end{align}
we have
\begin{align*}
f(\xx') 
& \leq f(\xx) + \langle \nabla f(\xx), \txx \rangle + \langle \txx, \nabla^2 f(\xx) \txx\rangle \\
& \leq f(\xx) + \langle \nabla f(\xx), \txx \rangle + \gamma^2 f(\xx) \left\|\txx\right\|_1^2\,.
\end{align*}
Note that the right hand side is minimized for 
\[ \txx = -\frac{H_1\left(\nabla f(\xx)\right)}{2\gamma^2 f(\xx)}\,, \] 
where $H_1$ is the hard thresholding operator that zeroes out all but the top entry in absolute value. 
This is a coordinate descent step. 
Our step will be slightly more careful so that it doesn't unnecessarily increase the 
sparsity of $\xx$. We consider the following coordinate step
\begin{align*}
\txx = -\eta \nabla_i f(\xx) \onev_i\,,
\end{align*}
where $\eta > 0$ and $i$ are as defined in the lemma statement.
We now have a function value decrease of 
\begin{align*}
f(\xx) - f(\xx')
& \geq \left(\eta - \eta^2 \gamma^2 f(\xx)\right) (\nabla_i f(\xx))^2\,.
\end{align*}
The term $\eta- \eta^2 \gamma^2 f(\xx)$ is maximized at $\eta = \frac{1}{2 \gamma^2 f(\xx)}$. In addition, to stay in the $\ell_1$ neighborhood where the Hessian is stable,
we need to satisfy (\ref{eq:l1_neigh}) by making sure that $\eta \leq \frac{1}{2\gamma \left|\nabla_i f(\xx)\right|}$. Based on these requirements, we pick
\begin{align*}
\eta = \min\left\{\frac{1}{2\gamma^2 f(\xx)}, \frac{1}{2\gamma |\nabla_i f(\xx)|}\right\}
\end{align*}
and conclude that
\begin{align*}
f(\xx) - f(\xx')
& \geq \min\left\{\frac{1}{4\gamma^2 f(\xx)}, \frac{1}{4\gamma |\nabla_i f(\xx)|}\right\} \left(\nabla_i f(\xx)\right)^2\\
& = \min\left\{\frac{(\nabla_i f(\xx))^2}{4\gamma^2 f(\xx)}, \frac{|\nabla_i f(\xx)|}{4\gamma}\right\}\,.
\end{align*}
Note that this is always $\geq 0$ and so we have $f(\xx') \leq f(\xx)$ even if $f(\xx) < f(\xx^*)$.
We now take two cases and use the two bullets of Lemma~\ref{lem:grad_lb} accordingly.
\paragraph{Case 1: $x_i = 0$.} The first bullet of Lemma~\ref{lem:grad_lb} has to be true, i.e.
\[ \left|\nabla_{i} f(\xx)\right| \geq \frac{f(\xx) - f(\xx^*)}{\left\|\xx^*\right\|_1 + \lambda\left\|\xx\right\|_1} \,.\]
Therefore,
\begin{align*}
f(\xx) - f(\xx')
& \geq \min\left\{\frac{(f(\xx) - f(\xx^*))^2}{4\gamma^2 f(\xx) \left(\left\|\xx^*\right\|_1 + \lambda\left\|\xx\right\|_1\right)^2}, 
\frac{f(\xx) - f(\xx^*)}{4\gamma\left(\left\|\xx^*\right\|_1 + \lambda\left\|\xx\right\|_1\right)}\right\}\\
& = \frac{(f(\xx) - f(\xx^*))^2}{4\gamma^2 f(\xx) \left(\left\|\xx^*\right\|_1 + \lambda\left\|\xx\right\|_1\right)^2}\,,
\end{align*}
where we used the facts that $f(\xx) - f(\xx^*) \leq f(\xx)$
and $\gamma \left\|\xx^*\right\|_1 \geq 1$.
\paragraph{Case 2: $x_i\neq 0$.} If the first bullet of Lemma~\ref{lem:grad_lb} is true, 
we can proceed as in the previous case. Otherwise, we use the second bullet of Lemma~\ref{lem:grad_lb}
and similarly get
\begin{align*}
f(\xx) - f(\xx')
& \geq \frac{(f(\xx) - f(\xx^*))^2}{4 \gamma^2 f(\xx) \left(\lambda^{-1} \left\|\xx^*\right\|_1 + \left\|\xx\right\|_1\right)^2}\,.
\end{align*}
Finally, in order to bound $\left\|\xx'\right\|_\infty$, we first note that 
$\left\|\xx\right\|_\infty \leq B'$. Now, by our choice of $i$ we have that 
either $\left|x_i\right| < B$, or $\nabla_i f(\xx) \cdot x_i > 0$. In the first case, we have
\begin{align*}
|x_i'| \leq |x_i| + |\tx_i| < B + \frac{1}{2\gamma}\,,
\end{align*}
where we used (\ref{eq:l1_neigh}). Otherwise, we have that $|x_i| \geq B$ and $\nabla_i f(\xx) \cdot x_i > 0$. This implies that $x_i$ and $\tx_i$ have different signs, so
\[ |x_i'| = |x_i + \tx_i| \leq \max\left\{|x_i|, |\tx_i|\right\} \leq \max\left\{B', \frac{1}{2\gamma}\right\}\,. \]
Therefore, in any case we have $|x_i'| \leq \max\left\{B', B + \frac{1}{2\gamma}\right\}$.
\end{proof}

\subsection{Theorems}

\subsubsection{Proof of Theorem~\ref{thm:sparse}}
\label{sec:proof_thm_sparse}

\begin{proof}
We will apply Lemma~\ref{lem:coord_upd} for $T$ iterations to obtain solutions $\xx^0,\dots,\xx^T$,
for some $T$ that will be defined later. The step size parameter $\lambda_t < 1$
disincentivizes updating zero entries of the solution vector. 
The logistic function 
$f$
is $2M$-second order robust and $M^2$-multiplicatively smooth with respect to the 
$\ell_1$ norm (Lemmas~\ref{lem:second_order_robustness} and \ref{lem:multiplicative_smoothness}), so 
Lemma~\ref{lem:coord_upd} can be applied with $\gamma = M$ and $B' = B + \frac{1}{2M}$.

Also, we can simplify the step size
used in Lemma~\ref{lem:coord_upd} to 
$\frac{1}{2M^2 f(\xx)}$.
This is because 
\begin{align*}
\left\|\nabla f(\xx)\right\|_\infty
& = \left\|\AA^\top (1 - \sigma(\AA\xx))\right\|_\infty\\
& \leq M \left\|1 - \sigma(\AA\xx)\right\|_1 \\
& \leq M f(\xx)\,,
\end{align*}
where we used the fact that $1-\sigma(t) = 1/(1+e^t) \leq \log (1+e^{-t})$. The
inequality holds because
the function $g(t) = (1+e^t) \log (1+e^{-t})$ is decreasing
($g'(t) = e^t\log(1+e^{-t}) - 1 \leq 0$) and goes to $1$ as $t\rightarrow\infty$.

The progress bound of Lemma~\ref{lem:coord_upd} depends on $\left\|\xx^t\right\|_1$. 
Based on the guarantee of Lemma~\ref{lem:coord_upd} about $\left\|\xx^t\right\|_\infty$, 
we can derive the following bound:
\begin{align*}
\left\|\xx^t\right\|_1
& \leq \left\|\xx^t\right\|_0 \left\|\xx^t\right\|_\infty\\
& \leq 
	\left\|\xx^t\right\|_0\left(B + \frac{1}{2M}\right)\\
& \leq \left\|\xx^t\right\|_0 (3/2)B\,.
\end{align*}

We can now bound the sparsity. 
Note that the sparsity increases by at most $1$ every time the first bullet of 
Lemma~\ref{lem:coord_upd} is true, and does not increase when the second bullet is true.
Therefore, the progress in each sparsity-increasing iteration, based on our choice of
$\lambda_t$, is
\begin{align*}
f(\xx^t) - f(\xx^{t+1}) 
& \geq \frac{\left(f(\xx^t)-f(\xx^*)\right)^2}{4f(\xx^t)M^2 \left(\left\|\xx^*\right\|_1 + \lambda_t\left\|\xx^t\right\|_1\right)^2}\\
& \geq \frac{\left(f(\xx^t)-f(\xx^*)\right)^2}{4f(\xx^t)M^2 \left(\left\|\xx^*\right\|_1 + B_1\right)^2}\\
& \geq \frac{\left(f(\xx^t)-f(\xx^*)\right)^2}{16f(\xx^t)M^2\left\|\xx^*\right\|_1^2}\,.
\end{align*}
The following auxiliary lemma will help us turn this into a convergence result:
\begin{lemma}
Consider a function $f:\mathbb{R}^n\rightarrow\mathbb{R}_{>0}$ 
and a sequence $\xx^0,\xx^1,\dots$ of iterates such that for all $t$
\begin{align}
f(\xx^t) - f(\xx^{t+1}) \geq \alpha \frac{(f(\xx^t) - f(\xx^*))^2}{f(\xx^t)}
\label{eq:aux_progress}
\end{align}
for some $\xx^*$ with $f(\xx^*) \leq \min_t f(\xx^t)$. Then,
\begin{align*}
f(\xx^T)\leq (1+\delta) f(\xx^*) + \eps
\end{align*}
after 
\[ T 
\leq 2 \alpha^{-1} \left(\frac{1}{\delta} + \log \frac{f(\xx^0) - f(\xx^*)}{\eps}\right) \]
iterations.
\label{lem:aux_convergence}
\end{lemma}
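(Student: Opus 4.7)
The plan is to track the optimality gap $\Delta_t := f(\xx^t) - f(\xx^*) \geq 0$, which is nonnegative by the hypothesis $f(\xx^*) \leq \min_t f(\xx^t)$, and to rewrite the assumed progress inequality~(\ref{eq:aux_progress}) as
\[
\Delta_t - \Delta_{t+1} \;\geq\; \alpha\,\frac{\Delta_t^2}{\Delta_t + f(\xx^*)}\,.
\]
This recurrence behaves qualitatively differently depending on which of $\Delta_t$ and $f(\xx^*)$ dominates the denominator, so I would split the run into two phases accordingly, analyze each with its natural potential, and then stitch the iteration counts together.

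In the \emph{large} regime $\Delta_t \geq f(\xx^*)$ the denominator is at most $2\Delta_t$, and the recurrence collapses to pure geometric contraction $\Delta_{t+1} \leq (1-\alpha/2)\,\Delta_t$. Thus after at most $(2/\alpha)\log(\Delta_0/\tau)$ iterations the gap falls below any chosen threshold $\tau$; the natural choice here is $\tau = \max\{f(\xx^*),\eps\}$, so that we either already satisfy the target $\Delta_T \leq \delta f(\xx^*) + \eps$, or we have crossed into the small regime.

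In the \emph{small} regime $\Delta_t \leq f(\xx^*)$ the denominator is at most $2 f(\xx^*)$, giving $\Delta_t - \Delta_{t+1} \geq \alpha \Delta_t^2 / (2 f(\xx^*))$. Dividing by $\Delta_t \Delta_{t+1}$ and using the monotonicity $\Delta_{t+1} \leq \Delta_t$ (which follows from the progress inequality since $\alpha > 0$) produces the telescoping bound $1/\Delta_{t+1} - 1/\Delta_t \geq \alpha/(2 f(\xx^*))$. Summing from the entry of this regime, where $1/\Delta \geq 1/f(\xx^*)$, yields $\Delta_{t+k} \leq 2 f(\xx^*)/(2+k\alpha)$, which drops below $\delta f(\xx^*)$ within $k \leq 2/(\alpha\delta)$ iterations.

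Combining the two phase budgets and splitting on whether $\eps \geq f(\xx^*)$ (in which case Phase~1 alone suffices, costing $(2/\alpha)\log(\Delta_0/\eps)$) or $\eps < f(\xx^*)$ (in which case Phase~1 costs at most $(2/\alpha)\log(\Delta_0/f(\xx^*)) \leq (2/\alpha)\log(\Delta_0/\eps)$ and Phase~2 adds $2/(\alpha\delta)$) collapses everything to the stated bound $2\alpha^{-1}\bigl(1/\delta + \log(\Delta_0/\eps)\bigr)$. The only real subtlety I foresee is this final bookkeeping: one has to verify that both branches of the case split are absorbed into the single uniform bound, and in particular that the $1/\delta$ overhead is only incurred when Phase~2 is actually executed, while the logarithmic term is always dominated by $\log(\Delta_0/\eps)$ regardless of where $f(\xx^*)$ sits relative to $\eps$.
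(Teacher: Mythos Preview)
Your proposal is correct and follows essentially the same two-phase argument as the paper: the paper defines $\bar t$ as the first time $f(\xx^{\bar t})\leq 2f(\xx^*)$ or $f(\xx^{\bar t})\leq f(\xx^*)+\eps$ (your threshold $\tau=\max\{f(\xx^*),\eps\}$), uses geometric contraction with rate $1-\alpha/2$ up to $\bar t$, and then the quadratic recurrence $\Delta_t-\Delta_{t+1}\geq \alpha\Delta_t^2/(2f(\xx^*))$ thereafter. Your telescoping of $1/\Delta_t$ is exactly the ``well-known recurrence'' the paper invokes, and your case split on $\eps$ versus $f(\xx^*)$ matches the paper's handling of the two stopping conditions for $\bar t$.
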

\begin{proof}
Let $\ot$ be the smallest $t \geq 0$ for which $f(\xx^{\ot}) \leq 2 f(\xx^*)$
or $f(\xx^{\ot}) \leq f(\xx^*) + \eps$, and let $\ot=\infty$
if this never happens. Therefore, for all $t < \ot$ we have 
$f(\xx^t) \geq 2 f(\xx^*)
\Rightarrow 
\frac{f(\xx^t) -f(\xx^*)}{f(\xx^t)} \geq \frac{1}{2}$,
Then, (\ref{eq:aux_progress}) implies
\begin{align*}
&
 f(\xx^t) - f(\xx^{t+1}) \geq \frac{\alpha}{2} \left(f(\xx^t) - f(\xx^*)\right)\\
\Rightarrow &
 f(\xx^{t+1}) - f(\xx^*) \leq \left(1-\frac{\alpha}{2}\right) \left(f(\xx^t) - f(\xx^*)\right)\,.
\end{align*}
Applying these for all $t$, we get
\begin{align*}
 f(\xx^{\ot}) - f(\xx^*) \leq \left(1-\frac{\alpha}{2}\right)^{\ot} \left(f(\xx^0) - f(\xx^*)\right)\,,
\end{align*}
which directly implies that
\begin{align*}
\ot \leq 2 \alpha^{-1} \log \frac{f(\xx^0) - f(\xx^*)}{\eps}\,.
\end{align*}
Now, let $t\geq \ot$. If $f(\xx^{\ot}) \leq f(\xx^*) + \eps$ we are done and there is nothing to prove. Otherwise, $f(\xx^t)\le  f(\xx^{\ot}) \leq 2 f(\xx^*)$
 and therefore
(\ref{eq:aux_progress}) implies
\begin{align*}
f(\xx^t) - f(\xx^{t+1}) \geq \frac{\alpha}{2f(\xx^*)} (f(\xx^t) - f(\xx^*))^2\,.
\end{align*}
This is a well known recurrence that leads to the bound
\begin{align*}
f(\xx^T) 
\leq f(\xx^*) + \frac{2f(\xx^*)}{\alpha (T-\ot)}
= \left(1 + \frac{2}{\alpha (T-\ot)} \right) f(\xx^*)\,,
\end{align*}
therefore 
\[ T \leq 
\frac{2}{\alpha \delta} + \ot
\leq 2 \alpha^{-1} \left(\frac{1}{\delta} + \log \frac{f(\xx^0) - f(\xx^*)}{\eps}\right)\,. \]
\end{proof}
Lemma~\ref{lem:aux_convergence} implies that the total number of sparsity-increasing
iterations is
\begin{align*}
s' := \left\|\xx^T\right\|_0 \leq 32\left\|\xx^*\right\|_1^2 M^2\left(\frac{1}{\delta} + \log \frac{f(\xx^0) - f(\xx^*)}{\eps}\right)\,.
\end{align*}

Now it remains to bound the total number of iterations in which the second bullet of 
Lemma~\ref{lem:coord_upd} is true. These iterations do not increase the sparsity.
We have 
\begin{align*}
f(\xx^t) - f(\xx^{t+1}) 
& \geq \frac{(f(\xx^t) - f(\xx^*))^2}{4 f(\xx^t) M^2\left(\lambda_t^{-1} \left\|\xx^*\right\|_1 + \left\|\xx^t\right\|_1\right)^2}\,.
\end{align*}
Note that
\begin{align*}
\lambda_t^{-1} \left\|\xx^*\right\|_1 
& = \max\left\{\left\|\xx^t\right\|_1 \left\|\xx^*\right\|_1/ B_1,\left\|\xx^*\right\|_1\right\} \\
& \leq \max\left\{C \left\|\xx^t\right\|_1,\left\|\xx^*\right\|_1\right\} \,,
\end{align*}
If $C\left\|\xx^t\right\|_1 \geq \left\|\xx^*\right\|_1$, then 
\begin{align*}
f(\xx^t) - f(\xx^{t+1}) 
& \geq \frac{(f(\xx^t) - f(\xx^*))^2}{4 f(\xx^t) M^2(C+1)^2\left\|\xx^t\right\|_1^2}\\
& \geq \frac{(f(\xx^t) - f(\xx^*))^2}{9 f(\xx^t) M^2B^2(C+1)^2 \left\|\xx^t\right\|_0^2 }\\
& \geq \frac{(f(\xx^t) - f(\xx^*))^2}{9f(\xx^t) M^2B^2(C+1)^2 (s')^2 }\,.
\end{align*}
By Lemma~\ref{lem:aux_convergence}, there can only be
\begin{align*}
T 
& = 9 M^2 B^2 (C+1)^2 (s')^2 \left(\frac{1}{\delta} + \log \frac{f(\xx^0) - f(\xx^*)}{\eps}\right)\\
& = O\left((s')^2 M^2 B^2 C^2  \left(\frac{1}{\delta} + \log \frac{f(\xx^0) - f(\xx^*)}{\eps}\right)\right)
\end{align*}
iterations.
Similarly if $C\left\|\xx^t\right\|_1 < \left\|\xx^*\right\|_1$ we have at most
\begin{align*}
& O\left(\left((s')^2 + \left\|\xx^*\right\|_0^2\right) M^2 B^2 \left(\frac{1}{\delta} + \log \frac{f(\xx^0) - f(\xx^*)}{\eps}\right)\right)
\end{align*}
such iterations, so the result follows.

\ifx 0
-------------

Based on the guarantee of Lemma~\ref{lem:coord_upd}, we get the following bound on the 
$\ell_1$ norm of $\xx^t$ at all times:
\begin{align*}
\left\|\xx^t\right\|_1
\leq n \left\|\xx^t\right\|_\infty
\leq 
	n\left(B + \frac{1}{2M}\right)
\leq (3/2) nB\,.
\end{align*}
Let $\ot$ be the smallest $t \geq 0$ for which $f(\xx^{\ot}) \leq 2 f(\xx^*)$
or $f(\xx^{\ot}) \leq f(\xx^*) + \eps$, and let $\ot=\infty$
if this never happens. Therefore, for all $t < \ot$ we have 
$f(\xx^t) \geq 2 f(\xx^*)
\Rightarrow 
\frac{f(\xx^t) -f(\xx^*)}{f(\xx^t)} \geq \frac{1}{2}$,
and so the statement of Lemma~\ref{lem:coord_upd} gives:
\begin{align*}
f(\xx^{t}) - f(\xx^{t+1}) 
& \geq \frac{f(\xx^t) - f(\xx^*)}{8 M^2 (\left\|\xx^*\right\|_1 + \left\|\xx^t\right\|_1)^2}\\
& \geq \frac{f(\xx^t) - f(\xx^*)}{8 n^2 M^2 (B + (3/2) B)^2}\\
& \geq \frac{f(\xx^t) - f(\xx^*)}{50 n^2 M^2B^2}\,,
\end{align*}
where we used the fact that $\left\|\xx^*\right\|_1 \leq n \left\|\xx^*\right\|_\infty \leq nB$.

\[  \left\|\xx^*\right\|_1 \leq s B \]

Compared to Corollary~\ref{cor:dense}, we have the differences that
and that we have the following tighter bounds because of sparsity:
\begin{align*}
& \left\|\xx^*\right\|_1 \leq s B \\
& \left\|\xx^t\right\|_1 \leq 
\left\|\xx^t\right\|_0 \left\|\xx^t\right\|_\infty \leq  
\left\|\xx^t\right\|_0 (3/2) B
\,.
\end{align*}

Equivalently,
\begin{align*}
f(\xx^{t+1}) - f(\xx^{*}) 
\leq \left(1 - \frac{1}{50 n^2 M^2 B^2}\right)
(f(\xx^{t}) - f(\xx^{*}))\,,
\end{align*}
and summing up these for $t\in\{0,1,\dots,\ot-1\}$, we get
\begin{align*}
f(\xx^{\ot}) - f(\xx^{*}) 
& \leq \left(1 - \frac{1}{50 n^2M^2B^2}\right)^{\ot}
(f(\xx^{0}) - f(\xx^{*}))\\
& \leq \eps\,,
\end{align*}
as long as 
\[ \ot \geq 50 n^2M^2B^2\log \frac{f(\xx^0) - f(\xx^*)}{\eps}\,,\]
therefore we conclude that $\ot$ is at most this quantity.

Now we consider the iterations $t \geq \ot$. If $f(\xx^{\ot}) \leq f(\xx^*) + \eps$
there are no such iterations and we are done.
Therefore we have that $f(\xx^{\ot}) \leq 2 f(\xx^*)$.
We again use Lemma~\ref{lem:coord_upd} for all such $t$, which gives 
\begin{align*}
f(\xx^{t}) - f(\xx^{t+1}) 
& \geq \frac{(f(\xx^t) - f(\xx^*))^2}{4 M^2 f(\xx^t) (\left\|\xx^*\right\|_1 + \left\|\xx^t\right\|_1)^2}\\
& \geq \frac{(f(\xx^t) - f(\xx^*))^2}{25 f(\xx^t) n^2M^2B^2}\\
& \geq \frac{(f(\xx^t) - f(\xx^*))^2}{50 f(\xx^*) n^2M^2B^2}\,.
\end{align*}
By known convergence results, this recurrence leads to the bound
\begin{align*}
f(\xx^{T})
& \leq f(\xx^*) + \frac{100 f(\xx^*) n^2 M^2B^2}{T-\ot}\\
& \leq f(\xx^*)\left(1 + \frac{100 n^2 M^2B^2}{T-\ot}\right)\,,
\end{align*}
implying that $f(\xx^T) \leq (1+\delta) f(\xx^*)$ after 
\begin{align*}
& T-\ot = O\left(n^2 M^2B^2 \frac{1}{\delta}\right)
\end{align*}
additional iterations after $\ot$. Therefore, the total number of iterations 
to achieve $f(\xx^T) \leq (1+\delta) \cdot f(\xx^*) + \eps$ is
\begin{align*}
& O\left(n^2M^2B^2 \left(\frac{1}{\delta} + \log \frac{f(\xx^0) - f(\xx^*)}{\eps}\right)\right)\,.
\end{align*}

We have 
\begin{align*}
& \max\left\{ \left\|\xx^*\right\|_1 + \lambda_t\left\|\xx^t\right\|_1,
\lambda_t^{-1}\left\|\xx^*\right\|_1 + \left\|\xx^t\right\|_1 \right\}\\
& \leq 
 \max\left\{ \left\|\xx^*\right\|_1 + \left\|\xx^t\right\|_1,
c^{-1} \left\|\xx^t\right\|_1 + \left\|\xx^t\right\|_1 \right\}\\
& \leq 
 \left\|\xx^*\right\|_1 +
(1 + c^{-1}) \left\|\xx^t\right\|_1\\
& \leq 
 \left\|\xx^*\right\|_1 +
\frac{3}{2} (1+c^{-1})\left\|\xx^t\right\|_0 B\\
& \leq 
 \left\|\xx^*\right\|_1 +
\frac{3}{2} (1+c^{-1})\left\|\xx^T\right\|_0 B\,.
\end{align*}
As a result, the progress bound of Lemma~\ref{lem:coord_upd} becomes 
\begin{align*}
f(\xx^t) - f(\xx^{t+1})
\geq \frac{\left(f(\xx) - f(\xx^*)\right)^2}{4f(\xx) M^2 \left(
 \left\|\xx^*\right\|_1 +
\frac{3}{2} (1+c^{-1})\left\|\xx^T\right\|_0 B
\right)^2}\,,
\end{align*}
and, analogously to the proof of Corollary~\ref{cor:dense} and using the fact that
$\left\|\xx^*\right\|_1 \leq \left\|\xx^*\right\|_0B$,
the 
total number of iterations is bounded by
\begin{align*}
T = O\left(
(1+c^{-1})^2
\left(
\left\|\xx^T\right\|_0^2 
+ \left\|\xx^*\right\|_0^2 
\right)
M^2 B^2
\left(\frac{1}{\delta} + \log \frac{f(\xx^0) - f(\xx^*)}{\eps}\right)\right)\,.
\end{align*}
\fi
\end{proof}

\subsubsection{Proof of Theorem~\ref{thm:sparse2}}
\label{sec:proof_thm_sparse2}

\begin{proof}
We move similarly to the proof of Theorem~\ref{thm:sparse}, but now we can strengthen
Lemma~\ref{lem:coord_upd} because $\xx^t$ is fully corrected for all $t$, i.e. 
$\nabla_i f(\xx^t) = 0$ for all $i\in\mathrm{supp}(\xx^t)$.
As in the proof of Lemma~\ref{lem:coord_upd}, we can lower bound the amount of progress
as a function of $\left\|\nabla f(\xx^t)\right\|_\infty$ as follows:
\begin{align*}
f(\xx^t) - f(\xx^{t+1}) \geq
	\frac{(\nabla_i f(\xx^t))^2}{4M^2 f(\xx^t)}
\,.
\end{align*}
Now, by convexity of $f$ we have
\begin{align}
\langle \nabla f(\xx^t), \xx^t - \xx^*\rangle \geq f(\xx^t) - f(\xx^*)\,.
\label{eq:s3grad}
\end{align}
Because of fully corrective steps we have $\langle \nabla f(\xx^t), \xx^t\rangle = 0$, and so the 
left hand side of (\ref{eq:s3grad}) is upper bounded by 
$\left\|\nabla f(\xx^t)\right\|_\infty \left\|\xx^*\right\|_1$. As a result, we have
\begin{align*}
\left\|\nabla f(\xx^t)\right\|_\infty^2 \geq \frac{\left(f(\xx) - f(\xx^*)\right)^2}{\left\|\xx^*\right\|_1^2}\,,
\end{align*}
and so we get the progress bound of 
\begin{align*}
f(\xx^t) - f(\xx^{t+1}) 
& \geq 
	\frac{(f(\xx^t)-f(\xx^*))^2}{4M^2 f(\xx^t) \left\|\xx^*\right\|_1^2}
\,.
\end{align*}
By Lemma~\ref{lem:aux_convergence} (similarly to
the proof of Theorem~\ref{thm:sparse}), this progress bound leads to
a sparsity of
\begin{align*}
s' := \left\|\xx^T\right\|_0 \leq O\left(\left\|\xx^*\right\|_1^2 M^2 \left(\frac{1}{\delta} + 
\log \frac{f(\xx^0) - f(\xx^*)}{\eps}\right)\right)
\end{align*}
and the same number of iterations.
\end{proof}

\section{Missing Proofs from Section~\ref{sec:dense}}
\subsubsection{Proof of Corollary~\ref{cor:dense}}
\label{sec:proof_thm_dense}

\begin{proof}

We will apply Lemma~\ref{lem:coord_upd} for $T$ iterations to obtain solutions $\xx^0,\dots,\xx^T$,
where for some $T$ that will be defined later.  
The logistic function 
$f$
is $2M$-second order robust and $M^2$-multiplicatively smooth with respect to the $\ell_1$ norm, so 
Lemma~\ref{lem:coord_upd} can be applied with $\gamma = M$ and $B' = B + \frac{1}{2M}$.

\ifx 0 
Now, notice that Lemma~\ref{lem:coord_upd} guarantees that $f(\xx^t)$ is non-increasing.
Therefore for any $t$ we have $f(\xx^t) \leq f(\zerov) \leq m$, implying
\begin{align*}
e^{M\left\|\xx^t\right\|_\infty} \leq 
	\sum\limits_{i=1}^n (e^{Mx_i^t} + e^{-Mx_i^t})
	\leq 
	\frac{4nme^{MB_\infty}}{\eps}\,, 
\end{align*}
and so we can get the following bound on the $\ell_1$ norm of $\xx^t$ at all times:
\fi
We get the following bound on the 
$\ell_1$ norm of $\xx^t$ at all times:
\begin{align*}
\left\|\xx^t\right\|_1
\leq n \left\|\xx^t\right\|_\infty
\leq 
	n\left(B + \frac{1}{2M}\right)
\leq (3/2) nB\,.
\end{align*}
Let $\ot$ be the smallest $t \geq 0$ for which $f(\xx^{\ot}) \leq 2 f(\xx^*)$
or $f(\xx^{\ot}) \leq f(\xx^*) + \eps$, and let $\ot=\infty$
if this never happens. Therefore, for all $t < \ot$ we have 
$f(\xx^t) \geq 2 f(\xx^*)
\Rightarrow 
\frac{f(\xx^t) -f(\xx^*)}{f(\xx^t)} \geq \frac{1}{2}$,
and so the statement of Lemma~\ref{lem:coord_upd} gives:
\begin{align*}
f(\xx^{t}) - f(\xx^{t+1}) 
& \geq \frac{f(\xx^t) - f(\xx^*)}{8 M^2 (\left\|\xx^*\right\|_1 + \left\|\xx^t\right\|_1)^2}\\
& \geq \frac{f(\xx^t) - f(\xx^*)}{8 n^2 M^2 (B + (3/2) B)^2}\\
& \geq \frac{f(\xx^t) - f(\xx^*)}{50 n^2 M^2B^2}\,,
\end{align*}
where we used the fact that $\left\|\xx^*\right\|_1 \leq n \left\|\xx^*\right\|_\infty \leq nB$.
Equivalently,
\begin{align*}
f(\xx^{t+1}) - f(\xx^{*}) 
\leq \left(1 - \frac{1}{50 n^2 M^2 B^2}\right)
(f(\xx^{t}) - f(\xx^{*}))\,,
\end{align*}
and summing up these for $t\in\{0,1,\dots,\ot-1\}$, we get
\begin{align*}
f(\xx^{\ot}) - f(\xx^{*}) 
& \leq \left(1 - \frac{1}{50 n^2M^2B^2}\right)^{\ot}
(f(\xx^{0}) - f(\xx^{*}))\\
& \leq \eps\,,
\end{align*}
as long as 
\[ \ot \geq 50 n^2M^2B^2\log \frac{f(\xx^0) - f(\xx^*)}{\eps}\,,\]
therefore we conclude that $\ot$ is at most this quantity.

Now we consider the iterations $t \geq \ot$. If $f(\xx^{\ot}) \leq f(\xx^*) + \eps$
there are no such iterations and we are done.
Therefore we have that $f(\xx^{\ot}) \leq 2 f(\xx^*)$.
We again use Lemma~\ref{lem:coord_upd} for all such $t$, which gives 
\begin{align*}
f(\xx^{t}) - f(\xx^{t+1}) 
& \geq \frac{(f(\xx^t) - f(\xx^*))^2}{4 M^2 f(\xx^t) (\left\|\xx^*\right\|_1 + \left\|\xx^t\right\|_1)^2}\\
& \geq \frac{(f(\xx^t) - f(\xx^*))^2}{25 f(\xx^t) n^2M^2B^2}\\
& \geq \frac{(f(\xx^t) - f(\xx^*))^2}{50 f(\xx^*) n^2M^2B^2}\,.
\end{align*}
By known convergence results, this recurrence leads to the bound
\begin{align*}
f(\xx^{T})
& \leq f(\xx^*) + \frac{100 f(\xx^*) n^2 M^2B^2}{T-\ot}\\
& = f(\xx^*)\left(1 + \frac{100 n^2 M^2B^2}{T-\ot}\right)\,,
\end{align*}
implying that $f(\xx^T) \leq (1+\delta) f(\xx^*)$ after 
\begin{align*}
& T-\ot = O\left(n^2 M^2B^2 \frac{1}{\delta}\right)
\end{align*}
additional iterations after $\ot$. Therefore, the total number of iterations 
to achieve $f(\xx^T) \leq (1+\delta) \cdot f(\xx^*) + \eps$ is
\begin{align*}
& O\left(n^2M^2B^2 \left(\frac{1}{\delta} + \log \frac{f(\xx^0) - f(\xx^*)}{\eps}\right)\right)\,.
\end{align*}
\end{proof}

\subsection{Gradient update lemma}
\begin{lemma}
Let $f:\mathbb{R}^n\rightarrow \mathbb{R}_{>0}$ be a twice continuously differentiable convex function that is $\gamma$-second order robust 
and $\mu$-multiplicatively smooth with respect to a norm $\left\|\cdot\right\|$ 
for some $\gamma, \mu > 0$.
Given a solution $\xx\in\mathbb{R}^n$, we make the update
\begin{align*}
\xx' = \xx - \eta \ggg\,,
\end{align*}
where 
\begin{align}
\ggg = \mathrm{argmin}_{\ggg \in\mathbb{R}^n}\, 
\langle \nabla f(\xx), -\ggg\rangle + \frac{1}{2} \left\|\ggg\right\|^2 
\label{eq:g_opt}
\end{align}
and
$\eta \leq \min\left\{\frac{1}{2\mu f(\xx)}, \frac{1}{\gamma \left\|\nabla f(\xx)\right\|}\right\}$
is a step size.
Then, the progress in decreasing $f$ is:
\begin{align*}
f(\xx) - f(\xx') \geq \frac{\eta}{2} \left\|\ggg\right\|^2\,.
\end{align*}
\label{lem:progress_lp_norm}
\end{lemma}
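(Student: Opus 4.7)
The plan is to run the standard Taylor-expansion argument for one gradient-type step, but with the second-order-robustness property substituting for the usual global-smoothness inequality, and with the optimality of $\ggg$ in (\ref{eq:g_opt}) giving us a clean expression for the first-order term. The $\mu$-multiplicative smoothness then takes care of the quadratic term, and the two bounds on $\eta$ in the hypothesis are exactly what is needed to close the computation.

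First I would apply Taylor's theorem (as in equation (\ref{eq:taylor})) with $\txx = -\eta \ggg$ to obtain
\[ f(\xx') = f(\xx) - \eta \langle \nabla f(\xx), \ggg\rangle + \frac{\eta^2}{2}\langle \ggg, \nabla^2 f(\oxx)\ggg\rangle \]
for some $\oxx$ on the segment from $\xx$ to $\xx'$. To replace $\nabla^2 f(\oxx)$ by $2\nabla^2 f(\xx)$ via the definition of $\gamma$-second-order robustness, I need $\|\xx'-\xx\| = \eta\|\ggg\| \leq 1/\gamma$. This should follow from the first-order optimality of $\ggg$ in (\ref{eq:g_opt}): scaling $\ggg$ along its own ray is optimal, so one gets $\langle \nabla f(\xx), \ggg\rangle = \|\ggg\|^2$, and then Hölder's inequality gives $\|\ggg\|^2 \leq \|\nabla f(\xx)\|_* \|\ggg\|$, i.e.\ $\|\ggg\| \leq \|\nabla f(\xx)\|_*$. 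Combined with the hypothesis $\eta \leq 1/(\gamma \|\nabla f(\xx)\|)$ (interpreting the norm of the gradient here as the dual norm, which coincides with $\|\nabla f\|_2$ in the self-dual setting used in Theorem~\ref{thm:dense_l2}), this yields the required containment in the second-order-robust ball.

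Plugging in the second-order robustness bound and then $\mu$-multiplicative smoothness, I get
\[ f(\xx') \leq f(\xx) - \eta\langle \nabla f(\xx), \ggg\rangle + \eta^2 \langle \ggg, \nabla^2 f(\xx)\ggg\rangle \leq f(\xx) - \eta\|\ggg\|^2 + \eta^2 \mu f(\xx)\|\ggg\|^2, \]
where in the first-order term I used the identity $\langle \nabla f(\xx), \ggg\rangle = \|\ggg\|^2$ from the previous step. Rearranging gives $f(\xx) - f(\xx') \geq \eta\bigl(1 - \eta\mu f(\xx)\bigr)\|\ggg\|^2$, and the second hypothesis $\eta \leq 1/(2\mu f(\xx))$ guarantees $1 - \eta \mu f(\xx) \geq 1/2$, delivering exactly $f(\xx) - f(\xx') \geq \frac{\eta}{2}\|\ggg\|^2$.

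The main subtle point, which I would flag as the chief potential obstacle, is making the norm/dual-norm accounting fully rigorous for a general norm: one has to justify the identity $\langle \nabla f(\xx), \ggg\rangle = \|\ggg\|^2$ (which is immediate once one recognizes (\ref{eq:g_opt}) as the conjugate of $\tfrac{1}{2}\|\cdot\|^2$ evaluated at $\nabla f(\xx)$, so $\|\ggg\| = \|\nabla f(\xx)\|_*$) and match the norm used in the step-size condition with the dual norm appearing here. For the $\ell_2$ instantiation in Theorem~\ref{thm:dense_l2} everything is self-dual and clean, so no issue arises; for other norms one would carry the dual norm through the computation.
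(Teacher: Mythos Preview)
Your argument is essentially identical to the paper's: Taylor expansion, second-order robustness to pass from $\nabla^2 f(\oxx)$ to $2\nabla^2 f(\xx)$, multiplicative smoothness for the quadratic term, the optimality identity $\langle \nabla f(\xx),\ggg\rangle=\|\ggg\|^2$, and finally $1-\eta\mu f(\xx)\ge 1/2$. You are in fact slightly more careful than the paper about the norm/dual-norm bookkeeping (the paper silently identifies $\|\ggg\|$ with $\|\nabla f(\xx)\|$ when writing the step-size constraint), and your flag on that point is well placed.
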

\begin{proof}
We first consider a generic update $\xx' = \xx + \txx$. By Taylor's theorem
and the fact that $f$ is twice continuously differentiable, we have
\begin{align*}
f(\xx') = f(\xx) + \langle \nabla f(\xx), \txx \rangle + \frac{1}{2} \langle \txx, \nabla^2 f(\oxx) \txx\rangle \,,%
\end{align*}
for some $\oxx$ that is entrywise between $\xx$ and $\xx'$.

Since $f$ is $\gamma$-second order robust 
and $\mu$-multiplicatively-smooth with respect to the norm $\left\|\cdot\right\|$,
as long as the update is bounded as 
\begin{align}
\left\|\txx\right\| \leq 1/\gamma\,,
\label{eq:grad_lp_neigh}
\end{align}
we have
\begin{align*}
f(\xx') 
& \leq f(\xx) + \langle \nabla f(\xx), \txx \rangle + \langle \txx, \nabla^2 f(\xx) \txx\rangle \\
& \leq f(\xx) + \langle \nabla f(\xx), \txx \rangle + \mu f(\xx) \left\|\txx\right\|^2\,.
\end{align*}
Note that the right hand side is minimized for 
\[ \txx = -\frac{1}{2\mu f(\xx)}\ggg\,. \] 
In addition, to stay in the neighborhood where the Hessian is stable,
we need to satisfy (\ref{eq:grad_lp_neigh}).
Based on these requirements, we make the update $\txx = -\eta \ggg$, where
\begin{align*}
\eta = \min\left\{\frac{1}{2\mu f(\xx)}, \frac{1}{\gamma\left\|\ggg\right\|}\right\}\,.
\end{align*}
Also, by the first-order optimality condition of (\ref{eq:g_opt}) it directly follows that
$\langle \nabla f(\xx), \ggg\rangle = \left\|\ggg\right\|^2$. Therefore,
\begin{align*}
f(\xx) - f(\xx')
& \geq \eta\langle \nabla f(\xx), \ggg\rangle - \eta^2\mu f(\xx) \left\|\ggg\right\|^2\\
& = \eta\left(1 - \eta\mu f(\xx)\right) \left\|\ggg\right\|^2\\
& \geq \frac{\eta}{2} \left\|\ggg\right\|^2\,,
\end{align*}
where the last inequality follows by our choice of step size.
\end{proof}
\begin{lemma}[Gradient update]
Let $f:\mathbb{R}^n\rightarrow \mathbb{R}_{>0}$ be a twice continuously differentiable convex function that is $\gamma$-second order robust 
and $\mu$-multiplicatively smooth with respect to the $\ell_2$ norm for some $\gamma, \mu > 0$.
Let $\xx\in\mathbb{R}^n$ be a solution such that $f(\xx) > f(\xx^*)$,
where $\xx^*\in\mathbb{R}^n$ is an unknown solution with
$\left\|\xx - \xx^*\right\|_2 \leq R$ 
for some $R>0$.
We make the update
\begin{align*}
\xx' = \xx - \eta \nabla f(\xx)\,,
\end{align*}
where
$\eta = \min\left\{\frac{1}{2\mu f(\xx)}, \frac{1}{\gamma \left\|\nabla f(\xx)\right\|_2}\right\}$
is a step size.
Then, the progress in decreasing $f$ is:
\begin{align*}
f(\xx) - f(\xx')
& \geq \min\left\{\frac{(f(\xx) - f(\xx^*))^2}{4\mu f(\xx) R^2}, 
\frac{f(\xx) - f(\xx^*)}{2\gamma R}\right\}\,.
\end{align*}
Additionally, as long as $\xx'$ is still suboptimal with respect to $\xx^*$, i.e.
$f(\xx') > f(\xx^*)$, the distance to $\xx^*$ decreases:
$\left\|\xx' - \xx^*\right\|_2 \leq \left\|\xx - \xx^*\right\|_2$.
Finally, if $f(\xx) \leq f(\xx^*)$, then $f(\xx') \leq f(\xx)$.
\label{lem:grad_upd}
\end{lemma}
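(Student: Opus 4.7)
The plan is to instantiate Lemma~\ref{lem:progress_lp_norm} with the $\ell_2$ norm and then use convexity to lower bound $\|\nabla f(\xx)\|_2$ in terms of $f(\xx)-f(\xx^*)$ and $R$. For the Euclidean norm, the minimizer in (\ref{eq:g_opt}) is simply $\ggg=\nabla f(\xx)$, so Lemma~\ref{lem:progress_lp_norm} immediately gives the progress bound
\begin{align*}
f(\xx)-f(\xx')\geq \tfrac{\eta}{2}\|\nabla f(\xx)\|_2^2\,.
\end{align*}
The step-size hypotheses required by that lemma are exactly the two arguments of the $\min$ defining $\eta$, so no additional verification is needed.

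Next, I would derive the gradient lower bound from convexity and Cauchy--Schwarz:
\begin{align*}
f(\xx)-f(\xx^*)\leq \langle \nabla f(\xx),\,\xx-\xx^*\rangle \leq \|\nabla f(\xx)\|_2\,\|\xx-\xx^*\|_2 \leq R\|\nabla f(\xx)\|_2\,,
\end{align*}
so $\|\nabla f(\xx)\|_2\geq (f(\xx)-f(\xx^*))/R$. Then I split into the two cases determined by which term realizes the minimum in the step size. When $\eta=1/(2\mu f(\xx))$, the progress bound becomes $\|\nabla f(\xx)\|_2^2/(4\mu f(\xx))\geq (f(\xx)-f(\xx^*))^2/(4\mu f(\xx)R^2)$. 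When $\eta=1/(\gamma\|\nabla f(\xx)\|_2)$, it becomes $\|\nabla f(\xx)\|_2/(2\gamma)\geq (f(\xx)-f(\xx^*))/(2\gamma R)$. Taking the min of the two estimates yields the claimed bound.

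For the distance-monotonicity claim, I would expand
\begin{align*}
\|\xx'-\xx^*\|_2^2 = \|\xx-\xx^*\|_2^2 -2\eta\langle \nabla f(\xx),\,\xx-\xx^*\rangle + \eta^2\|\nabla f(\xx)\|_2^2\,.
\end{align*}
It suffices to show $\eta\|\nabla f(\xx)\|_2^2 \leq 2\langle \nabla f(\xx),\xx-\xx^*\rangle$. By the progress bound above, $\eta\|\nabla f(\xx)\|_2^2 \leq 2(f(\xx)-f(\xx'))$, and since we assume $f(\xx')\geq f(\xx^*)$ this is at most $2(f(\xx)-f(\xx^*))$, which in turn is bounded by $2\langle \nabla f(\xx),\xx-\xx^*\rangle$ by convexity. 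Finally, the last assertion $f(\xx')\leq f(\xx)$ under $f(\xx)\leq f(\xx^*)$ is immediate: the progress bound gives $f(\xx)-f(\xx')\geq \tfrac{\eta}{2}\|\nabla f(\xx)\|_2^2\geq 0$ with no use of the relationship between $f(\xx)$ and $f(\xx^*)$.

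The argument is a straightforward chaining of previously established ingredients; the only slightly delicate point is the distance-monotonicity claim, where one must be careful to use the progress bound (rather than the naive $\eta^2\|\nabla f(\xx)\|_2^2$ estimate) in order to exploit the hypothesis $f(\xx')\geq f(\xx^*)$ and close the loop through convexity.
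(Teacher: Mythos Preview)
Your proposal is correct and follows essentially the same route as the paper: invoke Lemma~\ref{lem:progress_lp_norm} with the $\ell_2$ norm (so $\ggg=\nabla f(\xx)$), lower bound $\|\nabla f(\xx)\|_2$ via convexity and Cauchy--Schwarz, and for the distance-monotonicity expand $\|\xx'-\xx^*\|_2^2$ and chain the progress bound with $f(\xx')>f(\xx^*)$ and convexity. The paper's proof is essentially line-for-line the same.
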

\begin{proof}
We apply Lemma~\ref{lem:progress_lp_norm} with the $\ell_2$ norm, which gives $\ggg = \nabla f(\xx)$
and so
\begin{align*}
f(\xx) - f(\xx') 
& \geq \frac{\eta}{2} \left\|\nabla f(\xx)\right\|_2^2\\
& = \min\left\{\frac{1}{4\mu f(\xx)}, \frac{1}{2\gamma\left\|\nabla f(\xx)\right\|_2}\right\} \left\|\nabla f(\xx)\right\|_2^2\\
& = \min\left\{\frac{\left\|\nabla f(\xx)\right\|_2^2}{4\mu f(\xx)}, \frac{\left\|\nabla f(\xx)\right\|_2}{2\gamma}\right\}\,.
\end{align*}
This takes care of the case $f(\xx) \leq f(\xx^*)$, since it shows that $f(\xx') \leq f(\xx)$.
Now we deal with the case $f(\xx) > f(\xx^*)$. By convexity we have 
\begin{align*}
 f(\xx^*) 
 & \geq f(\xx) + \langle \nabla f(\xx), \xx^* - \xx \rangle \\
& \geq f(\xx) - \left\|\nabla f(\xx)\right\|_2 \left\|\xx^* - \xx\right\|_2\\
& \geq f(\xx) - \left\|\nabla f(\xx)\right\|_2 R
\,,
\end{align*}
which gives
\[ \left\|\nabla f(\xx)\right\|_2^2 
\geq \frac{(f(\xx) - f(\xx^*))^2}{R^2} \,,\]
and so 
\begin{align*}
f(\xx) - f(\xx')
& \geq \min\left\{\frac{(f(\xx) - f(\xx^*))^2}{4\mu f(\xx) R^2}, 
\frac{f(\xx) - f(\xx^*)}{2\gamma R}\right\}\,.
\end{align*}
For the norm bound, we suppose that $f(\xx') > f(\xx^*)$ (otherwise we are done). We have
\begin{align*}
& \left\|\xx' - \xx^*\right\|_2^2
- \left\|\xx - \xx^*\right\|_2^2 \\
& = \left\|\xx' - \xx\right\|_2^2 + 2 \langle \xx - \xx^*, \xx' - \xx\rangle\\
& = \eta^2 \left\|\nabla f(\xx)\right\|_2^2 -2 \eta\langle \xx - \xx^*, \nabla f(\xx)\rangle\,.
\end{align*}
Now, note that 
\[ \frac{\eta}{2} \left\|\nabla f(\xx)\right\|_2^2 \leq f(\xx) - f(\xx') \leq f(\xx) - f(\xx^*) \]
and by convexity $\langle \xx - \xx^*, \nabla f(\xx)\rangle \geq f(\xx) - f(\xx^*)$, so
\begin{align*}
& \left\|\xx' - \xx^*\right\|_2^2 - \left\|\xx - \xx^*\right\|_2^2 \\
& = \eta^2 \left\|\nabla f(\xx)\right\|_2^2 -2 \eta\langle \xx - \xx^*, \nabla f(\xx)\rangle\\
& \leq 0\,.
\end{align*}
\end{proof}

\subsection{Proof of Theorem~\ref{thm:dense_l2}}
\label{sec:proof_thm_dense_l2}
\begin{proof}
We repeatedly use Lemma~\ref{lem:grad_upd} to obtain iterates
$\xx^0,\xx^1,\dots,\xx^{T}$. Note that as long as $f(\xx^t) > f(\xx^*)$, we have 
$\left\|\xx^t - \xx^*\right\|_2 \leq \left\|\xx^0 - \xx^*\right\|_2 := R$.
We have 
\begin{align*}
f(\xx^t) - f(\xx^{t+1})
& \geq \min\left\{\frac{(f(\xx^t) - f(\xx^*))^2}{4\mu f(\xx^t) \left\|\xx^t - \xx^*\right\|_2^2}, 
\frac{f(\xx^t) - f(\xx^*)}{2\gamma\left\|\xx^t - \xx^*\right\|_2}\right\}\\
& \geq \min\left\{\frac{(f(\xx^t) - f(\xx^*))^2}{4\mu f(\xx^t) R^2}, 
\frac{f(\xx^t) - f(\xx^*)}{2\gamma R}\right\}\,.
\end{align*}
Let there be $T_1$ iterations where the first branch of the minimum is smaller, and $T_2$ where the 
second one is smaller, and suppose that $f(\xx^T) > (1+\delta) f(\xx^*) + \eps$, where $T = T_1+T_2$.
By Lemma~\ref{lem:aux_convergence} we immediately obtain that
\begin{align*}
T_1 \leq 8 \mu R^2 \left(\frac{1}{\delta} + \log \frac{f(\xx^0) - f(\xx^*)}{\eps}\right)\,.
\end{align*}
For $T_2$, a standard recurrence shows that 
\begin{align*}
T_2 \leq 4\gamma R \log \frac{f(\xx^0) - f(\xx^*)}{\eps}\,.
\end{align*}
Therefore the total number of iterations is bounded by 
\begin{align*}
T \leq O\left(\mu R^2 \left(\frac{1}{\delta} + \log \frac{f(\xx^0) - f(\xx^*)}{\eps}\right)
+ \gamma R \log \frac{f(\xx^0) - f(\xx^*)}{\eps}\right)\,.
\end{align*}
Replacing $\mu \leq \beta m^{-1}$ and $\gamma \leq 2\sqrt{\beta}$ we obtain the desired result.
\end{proof}

\end{document}